\newtheorem{lemma}{Lemma}  % 定义 lemma 环境
\newtheorem{proposition}{Proposition}[section]
\theoremstyle{definition}
\newtheorem{definition}{Definition}
\title{CLGNN: A Contrastive Learning-based GNN Model for Betweenness Centrality Prediction on Temporal Graphs}
\begin{document}

\maketitle
\vspace{-40pt}
\begin{center}
\textbf{Tianming Zhang\textsuperscript{a,*}, Renbo Zhang\textsuperscript{b,d,*}, Zhengyi Yang\textsuperscript{c}, Yunjun Gao\textsuperscript{d}, Bin Cao\textsuperscript{a}, Jing Fan\textsuperscript{a}}\\[1em]

\textsuperscript{a}School of Computer Science and Technology, Zhejiang University of Technology, 288 Liuhe Road, Hangzhou, 310023, Zhejiang, China\\
\textsuperscript{b}Center for Data Science, Zhejiang University, 866 Yuhangtang Road, Hangzhou, 310030, Zhejiang, China\\
\textsuperscript{c}School of Computer Science and Engineering, University of New South Wales, Sydney, 2052, New South Wales, Australia\\
\textsuperscript{d}School of Computer Science and Technology, Zhejiang University, 38 Zheda Road, Hangzhou, 310013, Zhejiang, China\\[1em]

\textsuperscript{*}Equal contribution.
\end{center}

\begin{abstract}
  Temporal Betweenness Centrality (TBC) measures how often a node appears on optimal temporal paths, reflecting its importance in temporal networks. However, exact computation is highly expensive, and real-world TBC distributions are extremely imbalanced. The severe imbalance leads learning-based models to overfit to zero-centrality nodes, resulting in inaccurate TBC predictions and failure to identify truly central nodes. Existing graph neural network (GNN) methods either fail to handle such imbalance or ignore temporal dependencies altogether. To address these issues, we propose a scalable and inductive contrastive learning-based GNN (CLGNN) for accurate and efficient TBC prediction. CLGNN builds an instance graph to preserve path validity and temporal order, then encodes structural and temporal features using dual aggregation, i.e., mean and edge-to-node multi-head attention mechanisms, enhanced by temporal path count and time encodings. A stability-based clustering-guided contrastive module (KContrastNet) is introduced to separate high-, median-, and low-centrality nodes in representation space, mitigating class imbalance, while a regression module (ValueNet) estimates TBC values. CLGNN also supports multiple optimal path definitions to accommodate diverse temporal semantics. Extensive experiments demonstrate the effectiveness and efficiency of CLGNN across diverse benchmarks. CLGNN achieves up to a 663.7~$\times$ speedup compared to state-of-the-art exact TBC computation methods. It outperforms leading static GNN baselines with up to 31.4~$\times$ lower MAE and 16.7~$\times$ higher Spearman correlation, and surpasses state-of-the-art temporal GNNs with up to 5.7~$\times$ lower MAE and 3.9~$\times$ higher Spearman correlation.
\end{abstract}
\section{Introduction}
\label{sec:intro}
%(betweenness for short)
Real-world systems such as social networks %transportation systems, 
and communication networks are often modeled as temporal graphs, where edges are associated with time intervals or discrete time points, indicating when they are active. In such graphs, certain individuals play a crucial role in driving rapid information diffusion or facilitating rumor spreading. Temporal generalizations of betweenness centrality are defined based on the optimal temporal paths. However, a significant challenge lies in the computational complexity of these generalizations, as calculation requires evaluating all possible temporal paths, which is resource-intensive. %Calculating exact TBC on a directed temporal graph has a time complexity of at least $O(n^3T^2)$ where $n$ is the number of vertices and $T$ is the total number of timestamps. Moreover, counting the earliest temporal paths is even $\sharp$P-hard. This level of complexity becomes prohibitive for massive graphs. 
To address this, we propose a novel graph neural network (GNN) model designed to accurately predict Temporal Betweenness Centrality (TBC) values. %To the best of our knowledge,  this is the first work to introduce a temporal GNN based on contrastive learning specifically for predicting TBC values.  

Existing graph learning models primarily focus on either computing betweenness centrality (BC) for static graphs \cite{FanZDCSL19, MauryaLM19} or performing vertex ranking on temporal graphs \cite{10.1145/3696410.3714943,ZhangF00F24}. To the best of our knowledge, only DBGNN \cite{HeegS24} predicts TBC values on temporal graphs. However, its prediction accuracy is low, our experiments show that the mean absolute error (MAE) for TBC reaches 496 on the dataset \textit{Highschool2013}. Unlike conventional ranking tasks, predicting TBC values on temporal graphs is more challenging due to the extreme class imbalance problem.
% 0值
\begin{figure*}[t]
    \centering
    \begin{subfigure}[t]{0.4\linewidth}
        \centering
        % \captionsetup{justification=centering, margin=-0.6cm, skip=1pt}  % 负值左移，正值右移
        \includegraphics[width=\linewidth]{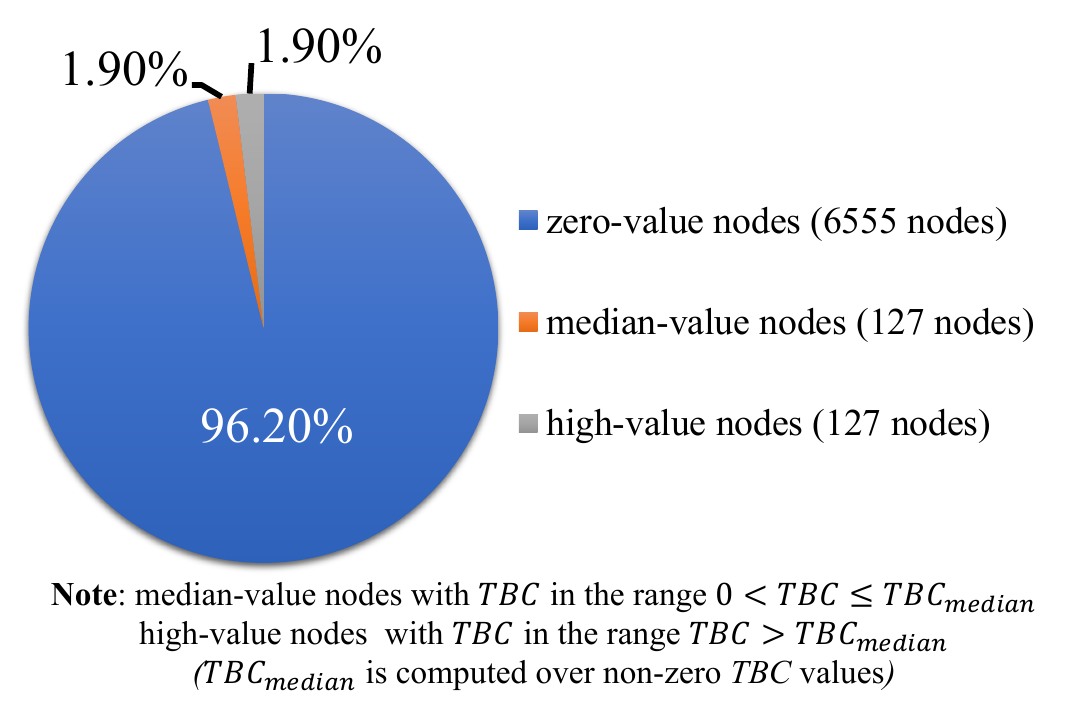}
        \caption{The proportion of nodes in 3 categories}
        \label{fig:example1}
    \end{subfigure}%
    %\hfill
    \hspace{0.05\linewidth}
    \begin{subfigure}[t]{0.4\linewidth}
        \centering
        % \captionsetup{justification=centering, margin=-2cm, skip=1pt}  % 负值左移，正值右移
        \includegraphics[width=\linewidth]{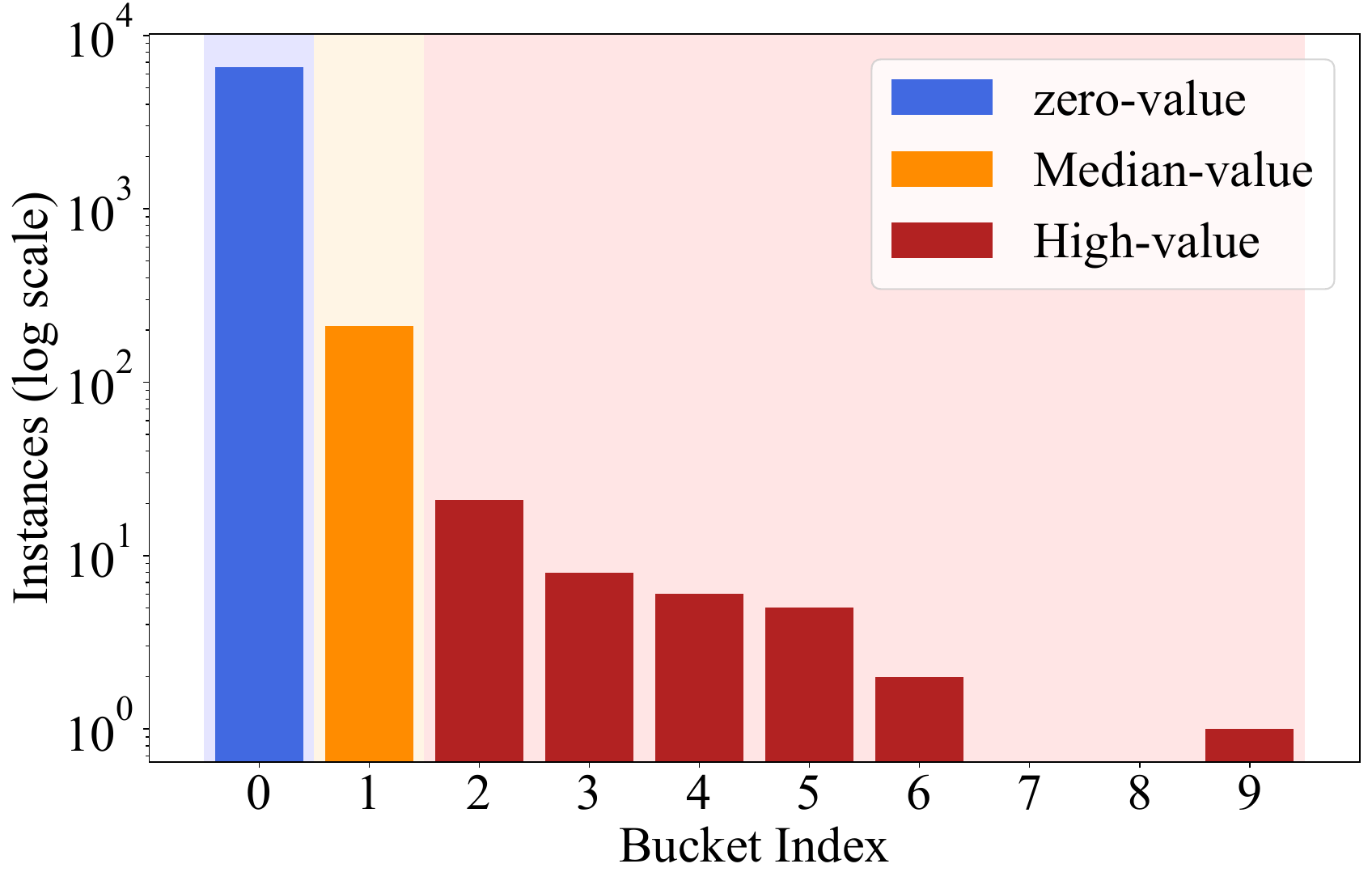}
        \caption{\mbox{The distribution of TBC values across 10 buckets}}
        \label{fig:example2}
    \end{subfigure}
    \vspace{-0.05in}
    \caption{Distribution of TBC values in the \textit{ia-reality-call} dataset}
    \label{fig:dataset_example}
    \vspace{-0.29in}
\end{figure*}

To illustrate this, we conducted a statistical analysis of the distribution of TBC values in the \textit{ia-reality-call} dataset, as shown in Figure~\ref{fig:dataset_example}. Figure~\ref{fig:example1} groups nodes into three categories (i.e., zero-, median-, and high-value) based on their TBC scores. The results reveal a highly imbalanced distribution, with 96.2\% of nodes having zero TBC, while only 1.9\% fall into each of the median and high-value categories.
Figure~\ref{fig:example2} presents a log-scale histogram of TBC values. The non-zero range is evenly divided into 9 equal-width intervals,  with a separate bucket for zero values.
It further demonstrates that high TBC nodes are not only rare but also span a wide range of magnitudes. %, highlighting the long-tail nature of TBC distribution in real-world temporal networks. 
This highlights the highly imbalanced nature of TBC distribution, where median and high values have significantly fewer observations.  This imbalance causes two problems:
(1) \textbf{Training is inefficient as most TBC values of vertices are zero, which contribute no useful learning for critical nodes with large TBC values}; (2)  \textbf{the zero values can overwhelm training and lead to degenerate models}. 

In such circumstances, applying existing temporal graph models (e.g., TATKC \cite{ZhangF00F24}, DBGNN~\cite{HeegS24}) results in significant limitations. These models fail to adequately learn the true TBC values of vertices in the median and high-value intervals, leading to most predicted TBC values in these categories being erroneously classified as 0. 
Existing work~\cite{YangZCWK21} has shown that trained models tend to be biased toward head classes (e.g., the zero-value class) with massive training data, resulting in poor performance on tail classes (e.g., median- and high-value intervals) that have limited data.
This is particularly problematic as vertices in the median and high-value intervals are critical hubs for analyzing information flow and propagation. %Misclassifying these key vertices as having zero TBC is unacceptable due to the significant role they play in understanding the dynamics of temporal graphs.
\textbf{We identify TBC distribution imbalance during training as the main obstacle impeding the model from achieving good accuracy.}

Directly applying techniques from imbalanced text and image learning to graph data, such as oversampling, undersampling, loss re-weighting, and data augmentation (e.g., node dropping, edge perturbation, and feature masking), has proven ineffective, %particularly in improving predictions for medium and high TBC values, 
as demonstrated in the experimental results shown in Tables~\ref{tab:ablation_contrastive}. Unlike pixels in images, graph nodes are inherently interconnected, and modifying the topology directly alters the TBC values, thereby distorting the underlying data distribution.
Inspired by the intuition behind contrastive learning, which aims to identify shared features among similar instances while distinguishing features among dissimilar ones, we propose applying this paradigm to TBC prediction. Our objective is to improve the accuracy of predictions for medium and high TBC values, particularly by avoiding their misclassification as zero-value nodes. 
By incorporating contrastive learning, the representations of zero-value vertices can be made more similar, while the representations of median and high-value nodes are positioned farther away from those of zero-value nodes. This separation allows the model to better capture and distinguish the unique features of median and high-value nodes, ultimately improving the accuracy of predictions across all TBC value ranges. 

Based on these insights, we propose CLGNN, a \textbf{C}ontrastive \textbf{L}earning-based \textbf{GNN} model composed of representation and prediction modules. The representation module aims to learn path-time dual-aware vertex representations that effectively capture both path-based and temporal dependencies. %Existing methods typically learn embeddings from the original temporal graph or k-order De Bruijn graphs~\cite{HeegS24}. However, they often introduce computation-irrelevant path information into nodes’ neighborhoods, leading to noisy representations and degraded performance in TBC prediction. This is because such methods aggregate all available paths into the node representation, rather than focusing on optimal temporal paths that are more relevant to the task. To overcome this, 
We construct an instance graph where each node aggregates all timestamps from its outgoing edges, and maintain reachable temporal path count lists that record valid temporal path dependencies while filtering out irrelevant paths. We then design a path count encoding, which, together with time encoding, is fused into a dual aggregation mechanism (i.e., mean aggregation and edge-to-node multi-head attention) to encode structural and temporal characteristics into rich node embeddings.

In the prediction module, we design two complementary components, KContrastNet and ValueNet. KContrastNet adopts a contrastive learning strategy to differentiate features among low, medium, and high-value nodes. To improve the selection of positive and negative sample pairs, we first perform clustering and then select positive pairs as nodes with similar TBC values within the same cluster. Conversely, nodes with significantly different TBC values within the same cluster are treated as negative pairs. ValueNet utilizes multi-layer perceptrons (MLPs) to estimate temporal betweenness centrality (TBC) values. To summarize, our main contributions are as follows:

\begin{itemize}[leftmargin=10pt, topsep=0pt, itemsep=1pt, parsep=0pt, partopsep=0pt]
\item We first propose an inductive and scalable contrastive learning-based TBC regression model that is trained on small graphs and generalizes to unseen graphs. In contrast to the state-of-the-art method DBGNN~\cite{HeegS24}, which trains and tests on the same graph and thereby limiting its generality, our model supports cross-graph inference.

\item The proposed model, CLGNN, jointly learns path-time dual-aware node representations and refines node embeddings via an improved stability-based clustering-guided contrastive learning to enhance TBC prediction.

%The proposed model, CLGNN, consists of a representation module and a prediction module. The representation module captures both path-based and temporal dependencies by constructing an instance graph, maintaining a logical path matrix, and integrating path and time encodings into an attention mechanism. The prediction module includes ValueNet, which estimates TBC values using MLPs, and KContrastNet, which uses contrastive learning to differentiate nodes based on centrality levels. A clustering-based sampling strategy further refines contrastive learning by selecting positive pairs with similar TBC values and negative pairs with significantly different TBC values within the same cluster, thereby enhancing feature discrimination.

\item  We conduct extensive experiments on 12 real datasets of varying sizes to evaluate CLGNN. CLGNN delivers up to 663.7~$\times$ faster performance than leading exact TBC computation methods. It also achieves up to 31.4~$\times$ lower MAE and up to 16.7~$\times$ higher Spearman correlation compared to top static GNN baselines, and surpasses state-of-the-art temporal GNNs with up to 5.7~$\times$ lower MAE and 3.9~$\times$ higher Spearman correlation.

% We conduct comprehensive experiments on 12 real-world temporal graphs, demonstrating that CLGNN consistently outperforms both static and temporal GNN baselines in terms of MAE and Spearman correlation. Our evaluation framework introduces a fine-grained, discretized TBC analysis (zero/mid/high) to better assess performance under severe centrality imbalance, where CLGNN achieves the lowest MAE across all datasets and excels particularly in high-centrality node prediction. Moreover, CLGNN delivers up to 664.5~$\times$ speedup over exact methods like ETBC across three time-respecting path definitions, making it suitable for large, temporally dense graphs. Finally, ablation studies confirm that our stability-guided contrastive module substantially improves performance and rank stability over conventional sampling or augmentation techniques, especially under skewed TBC distributions.%补充
\end{itemize}

%The rest of the paper is organized as follows. We first review the related work in Section~\ref{sec:relate}. Then, we introduce the concepts, and formulate the  problem in Section~\ref{sec:preli}. Section~\ref{sec:proposedmodel} derives a new TBC iterative accumulative formulation,  proposes an exact algorithm, and presents optimizations based on the compression technique. 
% Section~\ref{sec:approximate} introduces an approximate TBC computation method.  Section~\ref{sec:exp} reports experimental results and our findings. Finally, Section~\ref{sec:con} concludes the paper with some directions for future work.

% \section{Related work}\label{sec:relate}
% % \vspace{-0.9em}
% %In this section, we review the works most related to BC computations.

\section{Related work}
\label{sec:relate}
%In this section, We first  define the notions  of \emph{temporal graph}, \emph{temporal path}, and \emph{optimal temporal path}, and then define the problem of temporal betweenness centrality studied in this paper.

\textbf{Traditional BC computation method.} %based on the accumulation of pair dependencies
Representatives for exact BC computation on static graphs include  Brandes' algorithm~\cite{brandes2001faster} and its variants, such as Brandes++~\cite{ErdosIBT15} and BADIOS~\cite{SariyuceKSC17}. Approximate methods leveraging techniques such as source sampling~\cite{BrandesP07,JacobKLPT04} and node-pair sampling~\cite{RiondatoK16,RiondatoU16,BorassiN19,CousinsWR21,LF21} to improve accuracy and efficiency. Notable examples include RK~\cite{RiondatoK16}, KADABRA~\cite{BorassiN19}, and ABRA~\cite{RiondatoU16}, which offer theoretical guarantees. %Parallel implementations target CPUs~\cite{GrintenM20,GrintenAM19,CrescenziFP20,ShuklaRTK20}, GPUs~\cite{BonchiMR16,ShuklaRTK20,SariyuceKSC13}, and distributed systems~\cite{Hoang19}. 
Variants like $k$-betweenness centrality~\cite{PfefferC12}, $\kappa$-path centrality~\cite{KourtellisASIT17}, top-$k$ ego BC~\cite{ZhangLPDWY22} , Coarse-grained and Fine-grained BC~\cite{Wang24} aim to simplify computation. In temporal graphs, %BC computation methods for static graphs are inadequate due to the invalidity of the subpath optimality property. 
research has focused on temporal paths~\cite{BussMNR20,ERP15} and BC variants. %, such as strict and non-strict shortest, earliest, and fastest paths~\cite{BussMNR20,ERP15}. 
Tsalouchidou et al.~\cite{TsalouchidouBBL20}  explored exact computation using static snapshots or predecessor graphs.  Methods like ONBRA~\cite{SantoroS22} have introduced approximation techniques with probabilistic guarantees. Zhang et al.~\cite{TBC24}  proposed exact and approximate methods based on transformed time instance graphs. Regunta et al.~\cite{ReguntaTSK23}  leveraged parallelism to update TBC values incrementally.  Although the above-mentioned optimization techniques have been proposed, the computational complexity of TBC is still relatively high for large-scale temporal graphs.

\textbf{Graph neural network based model.}  
The modeling of dynamic graphs primarily focuses on leveraging temporal neural networks to capture evolving structures. Early approaches extended static  GNNs by incorporating RNNs to capture temporal dependencies~\cite{pareja2020evolvegcn, sankar2020dynamic}. More recent methods, such as TGAT~\cite{xu2020inductive}, TGN~\cite{rossi2020temporal}, and GATv2~\cite{brody2022attentive}, integrate self-attention mechanisms.  %TGAT introduced time-encoding and TGN unified prior frameworks with continuous-time message passing. 
De Bruijn Neural Networks~\cite{debruijn2023causality} extend causality-aware mechanisms for temporal graphs.  Nevertheless, these models failed to capture the intricate path structures required for betweenness computations, leading to low estimation accuracy. Some models are specially designed for BC ranking or value prediction on static graphs.  Early studies~\cite{MendoncaBZ21} used ML, while more recent approaches~\cite{FanZDCSL19, MauryaLM21} exploit graph attention and reinforcement learning to approximate centrality. CNCA-IGE~\cite{Zou24}  adopts an encoder-decoder for centrality ranking. In temporal settings, TATKC~\cite{ZhangF00F24} uses time-injected attention for Katz centrality ranking, and DBGNN~\cite{HeegS24} predicts centrality with a time-aware GNN. While effective in ranking, these models struggle with precise TBC value prediction, and they often misclassify mid-to-high value nodes as zero. Additionally, DBGNN is transductive, limiting generalization to unseen networks.

\section{Preliminaries}
\label{sec:preli}
%In this section, We first  define the notions  of \emph{temporal graph}, \emph{temporal path}, and \emph{optimal temporal path}, and then define the problem of temporal betweenness centrality studied in this paper.

\begin{definition}\label{defn:tg}
\textbf{$($Temporal Graph$)$.} A directed temporal graph is denoted by $G=(V, E, T)$, where $($i$)$ $V$ is a vertex set; $($ii$)$ $E \in V \times V \times T$ is the set of directed time-stamped edges;
and $($iii$)$ $T$ is the set of discrete or continuous time points at which interactions occur. Each edge $e = (u, v,t) \in E$ represents a connection from node $u$ to $v$ in $V$ at a specific time $t \in T$. 

%For each vertex $w \in V$, $T_w = \{t_i \mid 1\le i\le h\}$ is the set of distinct timestamps attached with the incoming edges of $w$.
\end{definition}

%Note that, in this paper, we focus on directed un-weighted temporal graphs, but the proposed techniques can be applied to temporal graphs that  have non-negative weights on the temporal edges. 

% Let $N_{in}(u) = \{w \mid (w, u, t')\}$ refer to the set of $u$'s in-neighbors; $N_{out}(u) = \{v \mid (u, v, t) \in E\}$ refer to the set of $u$'s out-neighbors;  $N(u) = N_{in}(u) \cup N_{out}(u)$ be the set of $u$'s neighbors; and $T_{out}(u) =  \{t \mid (u, v, t) \in E\}$ be the set of the timestamps attached with $u's$ outgoing edges.

% Table~\ref{tab:symb} summarizes the symbols frequently used throughout this paper.
% \begin{table}[ht]
% \centering
% \caption{\textcolor{red}{Summary of Symbols}}
% \label{tab:symb}
% \begin{tabular}{cl}
% \hline
% \textbf{Symbol} & \textbf{Description} \\
% \hline
% $N_{\text{in}}(u), N_{\text{out}}(u)$ & set of in-neighbors of node $u$, set of out-neighbors of node $u$ \\
% $N(u)$ & set of all neighbors of node $u$, i.e., $N_{\text{in}}(u) \cup N_{\text{out}}(u)$ \\
% $T_{\text{out}}(u)$ & list of timestamps on outgoing edges from $u$ \\
% $T_{uv}$ &set of timestamps associated with interactions from node $u$ to node $v$\\
% $\psi$&clustering results of \text{KMeans} function\\
% $\mathcal{S}^+(u), \mathcal{S}^-(u)$ &  set of positive samples of node $u$, set of negative samples of node $u$ \\
% $TBC_{median}$ & median temporal betweenness centrality across nodes with $TBC > 0$ \\
% \hline
% \end{tabular}
% \end{table}

\begin{definition}\label{defn:tp}
\textbf{$($Temporal Path$)$.} A temporal path in  $G=(V,E,T)$ is a sequence of edges $p$ $=$ $u$ $\stackrel{t_1}{\longrightarrow}$ $w_1$ $\cdots$ $\stackrel{t_{m-1}}{\longrightarrow}$ $w_{m-1}$ $\stackrel{t_m}{\longrightarrow}$ $v$, where (i) each edge in $p$ belongs to the edge set $E$; (ii) the sequence of timestamps satisfied $0$ $<$ $t_{i+1}$$-$$t_i$ $\le$ $\delta$  for  $1 \le i < m$, where $\delta \in \mathbb{R}$, ensuring that the interactions not only follow a ascending temporal order but also occur within a bounded time interval. %This constraint reflects the temporal coherence of the studied process, as many real-world applications (such as rumor diffusion, epidemic spreading) on temporal networks impose natural time constraints on consecutive interactions. 
\end{definition}
Temporal graphs differ from static graphs due to the presence of temporal constraints, which lead to multiple notions of optimal paths beyond the standard shortest path. Common definitions include the shortest, earliest arrival, and latest departure temporal paths, each optimizing for different temporal criteria. Given a temporal graph $G=(V,E,T)$, a \textbf{shortest temporal path} from source $s$ to destination $d$ is a temporal path (not necessarily unique) that contains the minimum number of hops (edges). An \textbf{earliest arrival temporal path}  reaches $d$ as early as possible. An \textbf{latest departure path} leaves $s$ as late as possible while still reaching 
$d$ on time. Based on optimal paths, the formal definition of temporal betweenness centrality (TBC) ~\cite{TsalouchidouBBL20, TBC24} is as follows:

\begin{definition}\label{defn:tbc}
{\bf (Temporal Betweenness Centrality, TBC)}. Given a temporal graph $G=(V,E,T)$, the normalized temporal betweenness centrality value TBC($v$) of a node $v \in V$ is defined as:
%\centerline{TBC($v$)= $\sum_{\forall s \neq v \neq t \in V}\frac{\sigma_{st}(v)}{\sigma_{st}}$,}
\begin{equation}\nonumber
\begin{split}
TBC(v)= \frac{1}{|V|(|V|-1)}\sum_{\forall s \neq v \neq z \in V}\frac{\sigma_{sz}(v)}{\sigma_{sz}},
\end{split}
\end{equation}
where $|V|$ is the total number of vertices in $G$; $\sigma_{sz}$ denotes the number of optimal temporal paths from $s$ to $z$; and $\sigma_{sz}(v)$ represents the number of such paths pass though $v$. 
\end{definition}
%For each node $v$ in the temporal graph $G$, TBC($v$) is proportional to the fractions of optimal temporal paths that traverse $v$. A higher number of temporal paths passing through  $v$ results in a greater $TBC(v)$ value, indicating its higher importance in facilitating temporal information flow within the network.

%{\bf (Problem Statement)}. This paper aims to approximate the TBC values for all vertices in a temporal graph.

% \begin{figure}[t]
% \centering
%    \includegraphics{fig2.eps}
% \vspace*{-0.1in}
% \caption{Example of a temporal graph $G$}
% \label{Fig:fig2}
% \vspace{-4mm}
% \end{figure}

\section{proposed model}
\label{sec:proposedmodel}

% 插入整体框架图+概述性的每一个块
\begin{figure*}[htbp]  % 控制图片位置
\vspace{-16pt}
  \centering  % 图片居中
  \includegraphics[width=\textwidth]{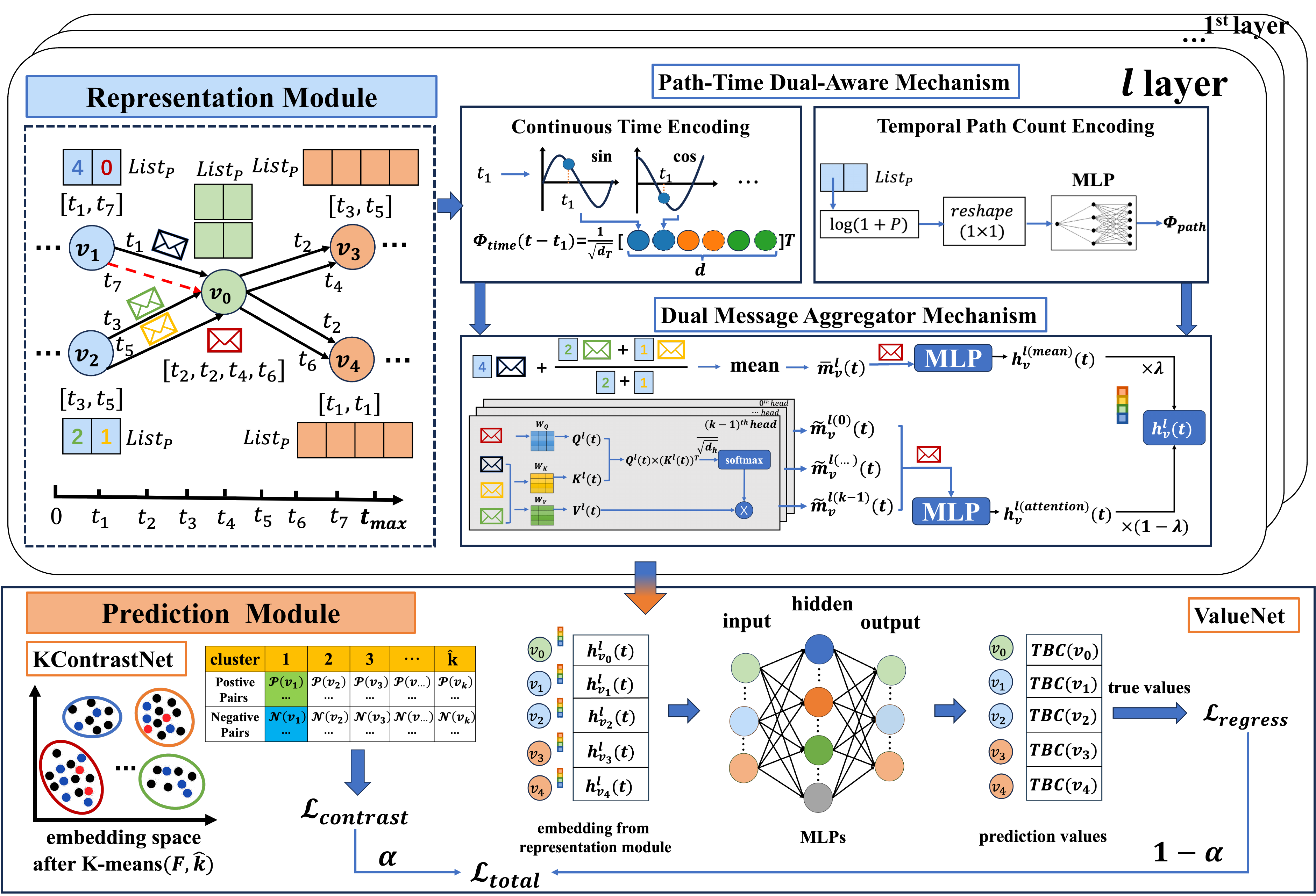}  % 图片路径和文件名
  \caption{Overview of the Proposed Model: CLGNN}  % 图片说明文字
  \label{fig:CLGNNarchitecture}  % 图片的标签，用于引用
\vspace{-10pt}
\end{figure*}
To effectively predict TBC under strong data imbalance, we propose a contrastive learning-based GNN model, CLGNN, as illustrated in Figure~\ref{fig:CLGNNarchitecture}. CLGNN is composed of two main components, i.e., a representation module and a prediction module. We describe each module in detail below. 

\subsection{Representation Module}\label{sec:rep}
%\textbf{Instance graph construction and temporal path counting procedure.}
%To introduce path-time dual-aware mechanism, 
To capture the temporal paths and preserve the temporal dependencies, we first transform temporal graphs into an instance-level structure, where each node $v$ is associated with the set $T_{out}(v)$ of the timestamps attached with $v$'s outgoing edges. In addition, for each node $u$ and its outgoing edge $(u,v,t)$, we maintain the number of valid temporal paths, denoted as:
\[
P(u,v,t) = \sum_{t_v \in T_{\mathrm{out}}(v)} \mathbb{I}(t_v-t),
\]
where $\mathbb{I}(x)$ is an indicator function that defined as: $\mathbb{I}(x) = 
\begin{cases}
1, & \text{if } x > 0 \\
0, & \text{otherwise}
\end{cases}$. If $P(u,v,t)=0$, then the message generated by node $u$ will not be transmitted to node $v$ through the edge with timestamp $t$ during the message passing process. %As illustrated in Fig.~\ref{fig:rep_architecture}, this count matrix is stored in a sparse form to reduce memory overhead and is dynamically updated during the instance graph construction process.}

\textbf{Path-Time dual-aware mechanism.}
%According to Definition~\ref{defn:tbc}, the TBC of a node depends on the number of optimal temporal paths it lies on, and the selection of such paths is determined by the timestamps on edges. To enhance temporal sensitivity in node representations, we design a path-time dual-aware mechanism that 
It combines continuous time encoding and temporal path count encoding, jointly capturing temporal dependencies and relevant path information in the message passing process.

\textbf{\textbullet~Continuous Time Encoding.} We employed the approach in \cite{xu2020inductive}, which applies Bochner's theorem and Monte Carlo integral to map a temporal domain \( T = [0, t_{\text{max}}] \) to the \( d \)-dimensional vector space. 
\[
\Phi_{time}(t) = \sqrt{\frac{1}{d_T}} \left[ \cos(\omega_1 t), \sin(\omega_1 t), \dots, \cos(\omega_{d_T} t), \sin(\omega_{d_T} t) \right]^T ,
\]
where \( d_T \) is the finite dimension and \( \omega = (\omega_1, \dots, \omega_{d_T})^T \) are learnable parameters.

\textbf{\textbullet~Temporal Path Count Encoding.}
Temporal path count encoding processes the number of temporal paths to provide stable and efficient input features for the neural network. It is defined as: %First, the path count \( P(u, v, t) \) undergoes a \( \log(1 + P) \) transformation, which aims to compress the influence of large path values on model training, alleviating gradient explosion issues while preserving non-negativity and the distinguishability of the values. This logarithmic transformation is significant: it expresses frequency information in logarithmic form, making low-frequency paths more influential on the model, thus improving stability and training efficiency. Next, the transformed scalar values are reshaped into vector form to prepare for further processing in the network. To extract additional features, a Multi-Layer Perceptron is used to encode the path count. The whole process of temporal path count encoding can be expressed as:
\[
P(u, v, t) \xrightarrow{\text{log-transformed value}} \log(1 + P(u, v, t)),
\]
\[
\Phi_{path}(u,v,t) = \text{MLP}(\text{reshape}( \log(1 + P(u,v,t)) )),
\]
where the reshaping operation \( \text{reshape}(1, 1) \) converts the scalar into a suitable format for input into the network. Note that the path count \( P(u, v, t) \) undergoes a \( \log(1 + P) \) transformation, which aims to compress the influence of large path values on model training, alleviating gradient explosion issues while preserving non-negativity and the distinguishability of the values.

\textbf{Message Function.}
%Let $h_v \in \mathbb{R}^{d}$ denote the structural feature of node $v$, where $d$ is the dimensionality of the feature space. We adopt a multi-layer temporal graph neural network to update node representations based on time-aware interactions. 
For each interaction event $e=(u,v,t)$, a message is computed to update the representation of node $v$. Specifically, the message at layer $l$ is defined as:
\[
m^l_{uv}(t) = \left( h^{l-1}_{u}(t^-) || h^{l-1}_{v}(t^-)||\Phi_{time}(t-t^-) || \Phi_{path}(u,v,t) \right),
\]
% m^l_{uv}(t) = \left( h^{l-1}_{u}(t^-) \,\|\, h^{l-1}_{v}(t^-) \,\|\, \Phi_{time}(t) \right)

where $h^{l-1}_{u}(t^-)$, $h^{l-1}_{v}(t^-) \in \mathbb{R}^{d}$ are the hidden representation for node \( u \) and \( v \) in the previous layer (the $l-1$ layer) of the neural network just before time \( t \) (i.e., from the time of the previous interaction involving \( u \) and \( v \)), and $||$ denotes concatenation. %and  $\text{msg}(\cdot)$ is the message function, implemented as a concatenation of all inputs. 
The computed message $m^l_{uv}(t)$ is then passed into the aggregation function and contributes to the update of node $v$'s embedding at layer $l$.

\textbf{Dual Message Aggregator Mechanism.}
Since each node typically interacts with multiple neighbors and may receive multiple messages from the same neighbor at different times, there exist two levels of message aggregation: across edges and across nodes. To address this, we employ a dual aggregation mechanism, i.e., mean aggregation and edge-to-node multi-head attention, to balance the model’s stability and flexibility.
%Mean aggregation performs aggregation at the node level, ensuring that all neighbors contribute equally to the target node’s representation. In contrast, edge-to-node multi-head attention operates at the edge level, enabling the model to assign different importance weights to messages based on their temporal characteristics and edge-specific interactions. This distinction allows the model to simultaneously benefit from stable structural information and dynamic temporal attention, thereby enhancing its capacity to model complex temporal dependencies.

\textbf{\textbullet~Mean aggregation (node-level).} 
Mean aggregation performs aggregation at the node level, ensuring that all neighbors contribute equally to the target node’s representation. 
%Mean aggregation helps stabilize attention by preventing the model from focusing solely on high-weight edges. It averages the contributions of all neighbors, ensuring that no single neighbor dominates the message passing, and provides stability when path statistics or neighbor behaviors are unstable. 
At the $l$-th layer, the mean-aggregated message received by node \( v \) at time \( t \) is computed as:
\[
\bar{h}^l_v(t) = \frac{1}{|N_{\text{in}}(v)|} \sum_{u \in N_{\text{in}}(v)} \left( \sum_{t_x \in T_{uv}}  \frac{P(u, v, t_x)}{\sum_{t_x' \in {T}_{uv}} P(u, v, t_x')}  \cdot m^l_{uv}(t_x) \right),
\]
where $N_{in}(v)$ is the set of $v$'s in-neighbors; $T_{uv}$ is the set of timestamps associated with all edges from $u$ to $v$; \( m^l_{uv}(t_x) \) is the message computed at time \( t_x \) in the $l$-th layer. %This formulation ensures a normalized, path-aware aggregation across temporal edges and neighbors. 
Notably, only temporal paths that are reachable (i.e., those with \( P(u, v, t_x) > 0 \)) are considered in the aggregation. %This weighting scheme implicitly imposes a temporal dependency constraint on the neural network, allowing it to learn from valid and causally consistent message flows.

\textbf{\textbullet~Edge-to-node multi-head attention (edge-level).}
This mechanism computes attention for each edge connected to a node, ensuring that all temporal edges contribute to the node's representation update.  %Multiple attention heads allow the model to learn different ways of aggregating information from neighbors with varying importance, preventing the model from overfitting to high-degree or low-degree nodes and improving generalization.
The input matrix of the $l$-th attention layer is:
\[
M_v^l(t) = \left[ m^l_{uv}(t_x) \right]_{\forall(u,v, t_x) \in E_{in}(v),P(u, v, t_x) > 0 }^T, 
%\quad \text{s.t.} \quad u \in N_{\text{in}}(v),\; t_i \in T_{uv},\; P(u, v, t_i) > 0
\]
$E_{in}(v)$ is the set of $v$'s incoming edges. %The resulting matrix \( M^l(t) \in \mathbb{R}^{L \times d} \), where $L$ denote the number of time-respecting messages received  from node $v's$ incoming edges, i.e., the number of messages with $P(u, v, t_x) > 0$. 
Then \( M^l(t) \) is projected into query, key, and value spaces: $Q^l(t) = [h^{l-1}_{v}(t^-) \,||\, h^{l-1}_{v}(t^-) \,||\, \Phi_{\text{time}}(t - t^-) \,||\, \Phi_{\text{path}}(u, v, t)] \cdot W_Q$; $K^l(t) = M^l(t) \cdot W_K$; $V^l(t) = M^l(t) \cdot W_V$, where \( W_Q, W_K, W_V \in \mathbb{R}^{(d + d_T) \times d_h} \) are learnable projection matrices, and \( d_h \) is the hidden dimension. 
% The input matrix of  $l^{th}$ attention layer is:
% \textcolor{blue}{
% \[
% M_v^l(t) = \left[m^l_{u_1v}(t_1),m^l_{u_1v}(t_1'), \dots, m^l_{u_iv}(t_i),m^l_{u_iv}(t_i')\dots, m^l_{u_nv}(t_m),m^l_{u_nv}(t_m') \right]^T,  \text{s.t.} \quad (u_i,v,t_i) \in E_{in}(v), P(u, v, t_i) > 0
% \]
% $E_{in}(v)$ is the set of $v$'s incoming edges.The resulting matrix \( M_v^l(t) \in \mathbb{R}^{L \times d} \), where $L$ denote the number of time-respecting messages received by node $v's$ incoming edges, i.e., the number of messages with $P(u_i,v,t_i) > 0$. Then \( M_v^l(t) \) is projected into query, key, and value spaces: $Q^l(t) = [h^{l-1}_{v}(t^-) \,||\, h^{l-1}_{v}(t^-) \,||\, \Phi_{\text{time}}(t - t^-) \,||\, \Phi_{\text{path}}(u, v, t)] \cdot W_Q$; $K^l(t) = M^l(t) \cdot W_K$; $ V^l(t) = M^l(t) \cdot W_V$, where \( W_Q, W_K, W_V \in \mathbb{R}^{(d + d_T) \times d_h} \) are learnable projection matrices, and \( d_h \) is the hidden dimension.}
% Then \( M^l(t) \) are fed into three linear projections to obtain "query" matrix \( Q^l(t) = [h^{l-1}_{v}(t^-)||h^{l-1}_{v}(t^-)||\Phi_{time}(t-t^-)||\Phi_{path}(u,v,t)] \cdot W_Q \), "key" matrix \( K^l(t) = [M^l(t)] \cdot W_K \), and "value" matrix \( V^l(t) = [M^l(t)] \cdot W_V \), where $L = \sum_{u \in \mathcal{N}_{\text{in}}(v)} \left( \sum_{\substack{t_x \in T_{uv} \\ P(u, v, t_x) > 0}} 1 \right)$ and \( W_Q, W_K, W_V \in \mathbb{R}^{(d + d_T) \times d_h} \) are projection matrices capturing interactions between time encoding and node features, and \( d_h \) is the dimension of hidden layer. 
Next, the scaled dot-product attention is used in attention layers.  The aggregated message for node $v$ at the $l$-th layer, computed via the multi-head attention mechanism, is given by:
\[
\tilde{h}_{v}^l(t)=\text{Attn}(Q^l(t), K^l(t), V^l(t)) = \text{softmax} \left( \frac{Q^l(t)(K^l(t))^T}{\sqrt{d_h}} \right) V^l(t) \in \mathbb{R}^{d_h}
\]
\textbf{Embedding.}
The final embedding of node \( v \) at time \( t \), denoted as \( h^l_v(t) \),  is obtained by combining the outputs of mean aggregation and edge-to-node multi-head attention:
\begin{align*}
h^l_v(t) &= \lambda \cdot h_v^{l(\text{mean})}(t) + (1 - \lambda) \cdot h_v^{l(\text{attention})}(t), \\
h_v^{l(\text{mean})}(t) &= \text{MLP}( \bar{h}_v^{l}(t) \parallel h_v^{l-1}(t)), \\
h_v^{l(\text{attention})}(t) &= \text{MLP}\left( \tilde{h}_v^{l(0)}(t) \parallel \tilde{h}_v^{l(1)}(t) \parallel \cdots \parallel \tilde{h}_v^{l(k-1)}(t) \parallel h_v^{l-1}(t) \right),
\end{align*}
where \( k \) is the number of heads, and \( h_v^{l(j)}(t) \) \((0 \leq j < k)\) is the dot-product attention output of the \( j \)-th head.   \( \lambda \) is a coefficient that balances the contribution of the two aggregation types. This weighted combination allows the model to trade off between the stability of equal neighbor contributions and the flexibility of dynamic attention-based relevance, enabling expressive temporal node representations.

% \textbf{\ding{172} Mean Aggregation\quad}
% The representation \( h_v^{\text{mean}}(t) \) is computed by aggregating the information from  the node's neighbors at the node level, ensuring stability by equally weighting each neighbor's contribution.
%     \[
%     h_v^{l(\text{mean})}(t)=\text{MLP}( \bar{h}_v^{l}(t) \parallel h_v^{l-1}(t)),
%     \]
% \textbf{\ding{173} Multi-Head Attention Aggregation\quad}
% The representation \( h_v^{\text{attention}}(t) \) is computed using the edge-to-node multi-head attention mechanism, which dynamically adjusts the importance of neighbor nodes based on its temporal relevance and relationship with the target node.
%    \[
%     h_v^{l(\text{attention})}(t)=\text{MLP}\left( \tilde{h}_v^{l(0)}(t) \parallel \tilde{h}_v^{l(1)}(t) \parallel \cdots \parallel \tilde{h}_v^{l(k-1)}(t) \parallel {h}_v^{l-1}(t) \right),
%     \]
%     \textcolor{red}{where \( k \) is the number of heads, and \( h_v^{l(j)}(t) \) \((0 \leq j < k)\) is the dot-product attention output of the \( j \)-th head.}
    
%    The final embedding \( h^l_v(t) \) is the weighted sum of the two aggregation methods:

% \textcolor{red}{
% \[
% h^l_v(t)= \lambda \cdot h_v^{\text{mean}}(t) + (1 - \lambda) \cdot h_v^{\text{attention}}(t),
% \]}
% where \( \lambda \) is a coefficient that controls the contribution of the multi-head attention aggregation and the mean aggregation. The model can adjust \( \lambda \) to balance between stability and flexibility, allowing it to capture both complex temporal dependencies and the stability of neighbor contributions.

 \subsection{Prediction Module}\label{sec:pre}
% % 插入整体框架图
% \begin{figure*}[t]  % 控制图片位置
%   \centering  % 图片居中
%   \includegraphics[width=\textwidth]{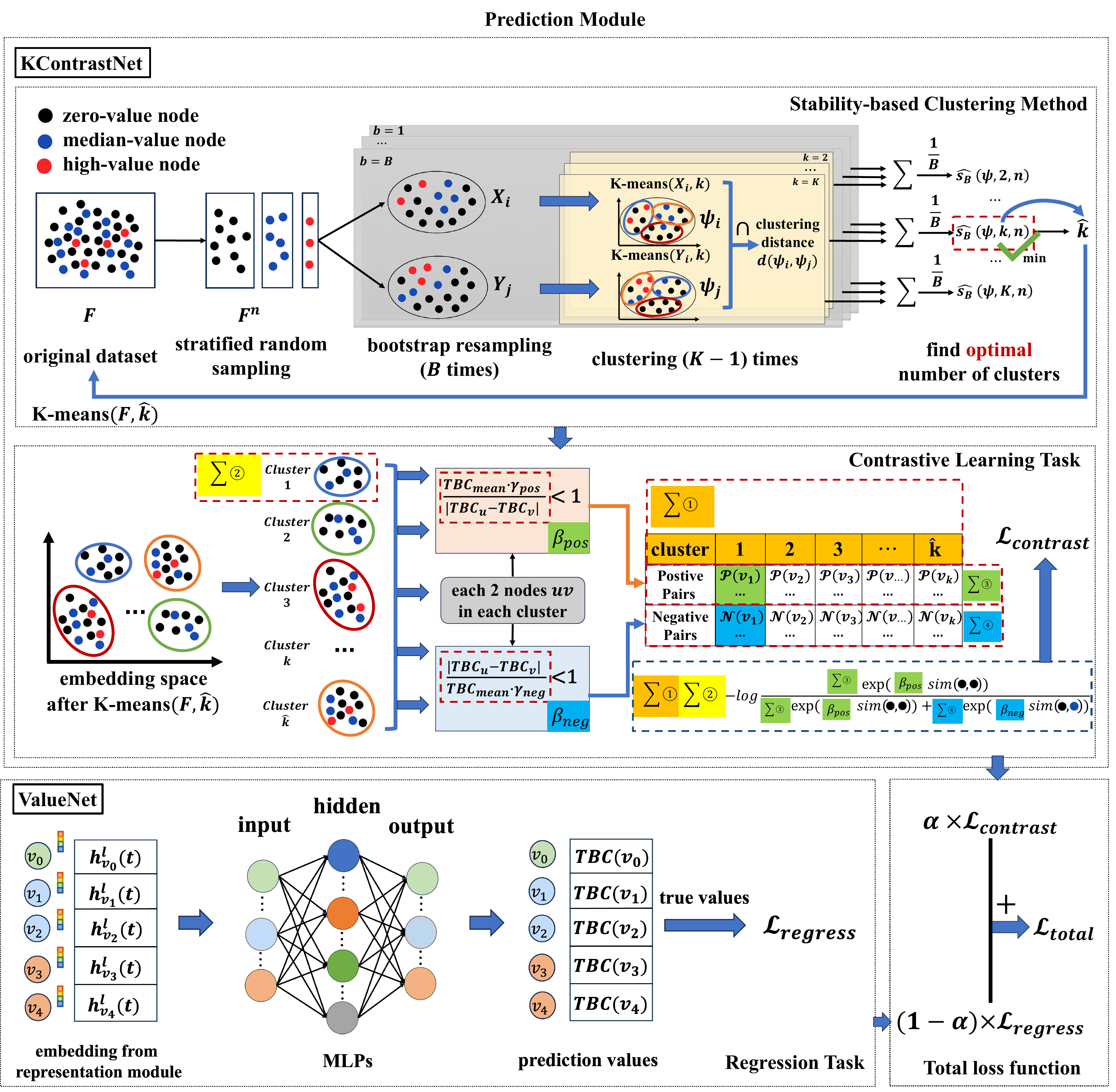}  % 图片路径和文件名
%   \caption{Overview of the prediction module}  % 图片说明文字
%   \label{fig:pre_architecture}  % 图片的标签，用于引用
% \vspace{-20pt}
% \end{figure*}
To address the imbalance in temporal betweenness centrality values and enhance the model's sensitivity to rare but important nodes, we design two complementary components, i.e., KContrastNet and ValueNet, in the prediction module.

\textbf{KContrastNet.}
 KContrastNet employs a contrastive learning strategy to effectively distinguish features among low-, medium-, and high-value nodes in TBC prediction. To enhance the selection of positive and negative sample pairs, we first introduce a stability-based clustering approach that identifies the optimal number \(k\) of clusters in the node representation space. %Within each cluster, nodes exhibiting similar TBC values are selected as positive pairs, while those with significantly different TBC values are treated as negative pairs. This strategy ensures high-quality and semantically meaningful contrastive samples.

\textbf{\textbullet~Stability-based clustering method.}
%The quality of clustering directly affects the effectiveness of contrastive learning, as it determines how positive and negative sample pairs are selected. 
Traditional methods for choosing the number of clusters, such as the elbow method~\cite{thorndike1953belongs} or silhouette score~\cite{rousseeuw1987silhouettes}, often struggle with noisy data or class imbalance, leading to unstable results. To address this, we propose a stability-based clustering method that combines bootstrap resampling~\cite{2012Selection} with K-means clustering~\cite{hartigan1979algorithm} to estimate the optimal number of clusters. To improve efficiency, we first perform stratified random sampling on the original dataset \( F \) to obtain a reduced subset \( F^n \)  (e.g., 40\% of the original size) that retains the original distribution.  Bootstrap resampling is then applied on \( F^n \) to generate \( B \)  independent sample pairs \( \{(X_i, Y_i)\}_{i=1}^B \), ensuring each drawn independently and identically from the empirical distribution (a formal justification is provided in Appendix~\ref{appendix:iid}). For each pair and each candidate cluster number \( k \in \{2, \dots, K\} \), we run k-means separately on  \( X_i \) and \( Y_i \) to obtain two cluster assignments \( \psi_{X_i}\) and \( \psi_{Y_i}\), respectively. We then compute the clustering distance between these assignments and average the results over all \( B \) pairs to obtain a clustering instability, as formally defined below.
\begin{definition}\label{defn:CD}
{\bf (Clustering Distance)}. The clustering distance between two clustering results $\psi_{X_i}$ and $\psi_{Y_i}$ is defined as:
\[
d(\psi_{X_i}, \psi_{Y_i}) = E_{x, y \in (X_i \cap Y_i)} \left[ \delta(\psi_{X_i}(x), \psi_{X_i}(y)) - \delta(\psi_{Y_i}(x), \psi_{Y_i}(y)) \right],
\]
where \( E \) denotes the expectation operator, computed over all possible samples \( x \) and \( y \). $\psi_{X_i}(x)$ denotes the cluster label assigned to sample $x$ in the clustering result $\psi_{X_i}$. The function $\delta(a, b)$ is the Kronecker Delta, defined as: $
\delta(a, b) = 
\begin{cases}
1, & \text{if } a = b, \\
0, & \text{if } a \neq b.
\end{cases}.
$
\end{definition}
%可放附录
\begin{definition}\label{defn:CI}
{\bf (K-means Clustering Instability)}. 
The clustering instability quantifies the variation in clustering results across different resamples and is defined as:
\[
\hat{s}_B(\psi, k, n) = \frac{1}{B} \sum_{i=1}^{B} d\left(\text{KMeans}(X_{i}, k), \text{KMeans}(Y_{i}, k)\right),
\]
$\psi$ is the clustering results of \text{KMeans} function and 
$d(\cdot,\cdot)$ is the clustering distance defined above.
% where $X_i$ and $Y_i$ are two independent random samples drawn from the data distribution $F_n$, and $d_F$ is the clustering distance. 
\end{definition}
Finally, the optimal number of clusters  $\hat{k}$ $=$ $\arg~\min_{2\leq k \leq K}$ $\hat{s}_B$ $(\psi,$ $k,$ $n)$ by minimizing the instability. Statistical properties of the estimator, including unbiasedness and consistency, are detailed in Appendix~\ref{appendix:instability_consistency}. And the complete algorithm and its complexity analysis are provided in Appendix~\ref{appendix:algorithm}.
% The optimal number of clusters $k_0$ is selected by minimizing the instability:
% $\hat{k} = \arg \min_{2\leq k \leq K} \hat{s}_B(\psi, k, n)$. 
% We propose an enhanced K-means algorithm that incorporates the selection of the optimal number of clusters \( k \) using bootstrap resampling and clustering instability evaluation. The algorithm follows these steps:
% At first, we generate \( B \) bootstrap samples by resampling the original dataset. Each bootstrap sample has the same size as the original dataset. Then for each bootstrap sample, we apply the K-means clustering algorithm and compute clustering results for various values of \(k\) (the number of clusters). Then for each clustering number \( k \), we calculate the \( d_F \) between clustering results across all bootstrap samples and then compute  \( \hat{s}_B(\psi, k, n) \), which quantifies the variation in clustering results across different resamples. Then the optimal number of clusters \( \hat{k} \) is selected by minimizing the clustering instability \( \hat{s}_B(\psi, k, n) \), ensuring that the clustering results are consistent across multiple resamples. Once the optimal \( \hat{k} \) is determined, perform K-means clustering on the entire dataset using \( \hat{k} \) clusters to obtain the final clustering result.

\textbf{\textbullet~Contrastive Learning.}
% As shown in Fig.~\ref{fig:pre_architecture}, 
After clustering, each node is assigned a cluster label, which guides the construction of supervised contrastive learning pairs.   To ensure that positive samples are both semantically meaningful and structurally consistent, we introduce a conditional sampling strategy. Specifically, positive samples are formed by selecting nodes from the same cluster that have similar TBC values, while negative samples are chosen from the same cluster but with large TBC differences. A theoretical justification for choosing within-cluster sampling alternatives is provided in Appendix~\ref{appendix:intracluster}.
 %to select positive samples from nodes within the same cluster, while ensuring minimal TBC differences, and to generate negative samples by contrasting nodes within the same cluster but exhibiting significant differences in TBC.

Positive sample selection: let \( \mathcal{S^+}(u) \) be the set of positive samples for node \( u \), defined as:
\[
\mathcal{S}^+(u) = \left\{ v \mid \delta(\psi(u), \psi(v))=1, 0<\left| TBC(u) - TBC(v) \right| \leq \gamma_{pos} \cdot TBC_{median} \right\},
\]
where \( \gamma_{pos} \in (0,1) \) is a similarity threshold  (e.g., 0.5). The use of the median TBC value ensures robustness against outliers and skewed distributions, improving the stability of contrastive thresholds. %As a robust statistic, the median mitigates the influence of outliers and better represents central tendency under heavy-tailed distributions. This improves the stability of contrastive thresholds and enhances learning robustness under TBC imbalance.

Negative sample selection: let \( \mathcal{S}^-(u) \) be the set of negative samples for node \( u \), defined as:
\[
\mathcal{S}^-(u) = \left\{ v \mid \delta(\psi(u), \psi(v))=1, \left| TBC(u) - TBC(v) \right| \geq \gamma_{neg} \cdot TBC_{median} \right\},
\]
where \( \gamma_{neg} \) (e.g., 0.5) is a dissimilarity threshold.

The contrastive re-weighted loss \( \mathcal{L}_{\text{contrast}} \) is then computed across all clusters, encouraging the embeddings of positive pairs to be closer while pushing negative pairs apart, weighted by their semantic similarity. The loss is defined as follows:
% \[
% \!\!\mathcal{L}_{\text{contrast}} = \sum_{m=1}^{K} \sum_{u \in \text{bin}_m} -\log \frac{\sum_{v \in \mathcal{P}(u)} \exp\left( \beta_{uv}\cdot\text{sim}(\mathbf{h}_u, \mathbf{h}_v) / \tau \right)}{\sum_{v \in \mathcal{P}(u)} \exp\left( \beta_{uv}\cdot\text{sim}(\mathbf{h}_u, \mathbf{h}_v) / \tau \right)+\sum_{ q \in \mathcal{N}(u)} \exp\left( \beta_{uq}\cdot\text{sim}(\mathbf{h}_u, \mathbf{h}_q) / \tau \right)}\!\!
% \]
\[
\mathcal{L}_{\text{contrast}} = \sum_{i=1}^{\hat{k}} \sum_{u \in \psi_i} -\log \frac{\sum\limits_{v \in \mathcal{S}^+(u)} \exp\left( \beta_{uv} \cdot \text{sim}(\mathbf{h}_u, \mathbf{h}_v)/\tau \right)}{\sum\limits_{v \in \mathcal{S}^+(u)} \exp\left( \beta_{uv} \cdot \text{sim}(\mathbf{h}_u, \mathbf{h}_v)/\tau \right) + \sum\limits_{w \in \mathcal{S}^-(u)} \exp\left( \beta_{uw} \cdot\text{sim}(\mathbf{h}_u, \mathbf{h}_w)/\tau \right)}
\]
where \(\mathbf{h}_u\) is the node embedding of GNN introduced in section~\ref{sec:rep},  \( \text{sim}(\cdot, \cdot) \) denote the  dot product similarity.  \( \tau \) is the temperature parameter of InfoNCE~\cite{InfoNCE}.  the pairwise weights for positive samples \( \beta_{uv} \) and negative samples \( \beta_{uw} \) are calculated as:
\[
\beta_{uv} = \frac{TBC_{median}\cdot \gamma_{pos}}{\left| TBC(u) - TBC(v) \right|} , ~~~~
\beta_{uw} = \frac{\left| TBC(u) - TBC(w) \right|}{TBC_{median}\cdot \gamma_{neg} } 
\]
The weight design ensures that node embeddings are adjusted in proportion to their TBC differences. For positive samples,  the smaller the TBC gap between nodes $u$ and $v$, the larger the weight \( \beta_{uv} \),  encouraging the model to learn more from highly similar pairs. For negative samples, the weight   \( \beta_{uw} \) increases with larger TBC differences, emphasizing separation from more dissimilar nodes. A theoretical analysis of this contrastive loss design is provided in Appendix~\ref{appendix:contrastive_theory}. %To enhance positive sample learning and suppress the influence of irrelevant negative samples, both \( \beta_{uv} \) and  \( \beta_{uw} \) are lower-bounded by 1, ensuring that even minimal TBC differences contribute meaningfully to the contrastive loss.

\textbf{ValueNet.}
%In ValueNet, we predict the TBC values of nodes based on the representations learned in the representation module described in Section~\ref{sec:rep}. 
For each node \( v \in V \), the TBC score at the largest timestamp (i.e., lateset time) \( t_{\max} \), denoted as \( \text{TBC}_{t_{max}}(v) \),  represents the final TBC value of \( v \) under the full-graph setting. It is predicted using a three-layer MLP with ReLU activations:
\[
TBC(v) = TBC_{t_{\max}}(v) = \text{MLP}(h_v^l(t_{\max})),
\]
where \( h_v^l(t_{max}) \) is the output representation of node \( v \) from the last GNN layer at time \( t_{max} \).

\textbf{Total Loss Function.}
%To jointly optimize the TBC regression task and the contrastive representation learning, we design 
A weighted total loss function that balances the regression loss and the contrastive loss is defined as:
\[
\mathcal{L}_{\text{total}} = \alpha \cdot \mathcal{L}_{\text{contrast}} + (1 - \alpha) \cdot \mathcal{L}_{\text{regress}}
\]
where \( \mathcal{L}_{\text{regress}} \) is the regression loss (e.g., MAE), and \( \mathcal{L}_{\text{contrast}} \) is the contrastive re-weighted loss as described above. The hyperparameter \( \alpha \in [0,1] \) controls the trade-off between the two objectives.

\section{Experimental evaluation}
\label{sec:exp}
\textbf{Datasets. }
We utilize 12 real-world temporal graphs with varying scales. %with the number of nodes ranging from 37 to over 190,000 and temporal edges up to 1.4 million. 
Table~\ref{tab:data} in Appendix~\ref{appendix:datasets} provides their details. %where $|V|$, $|E|$, and $|T|$ are the number of vertices, the number of edges, and the number of distinct timestamps, respectively. 
All
data sets are publicly available from  online  repositories, including netzschleuder~\cite{netzschleuder}, SNAP~\cite{snap}, SocioPatterns~\cite{fournet2014contact}, Konect~\cite{kunegis2013konect} and Network Repository~\cite{nr-temporal}.

%增加超参数描述和硬件软件setup
\textbf{Compared Baselines. } %To assess the computational efficiency of our method, including its acceleration over traditional computation methods, 
To evaluate the computational efficiency, we compare CLGNN with the exact algorithm ETBC~\cite{TBC24}. %, under three optimal path settings.  
To  evaluate effectiveness, we compare CLGNN against several baselines:  (i) \textbf{Static GNNs}, including GCN~\cite{GCN}; %, a classical static graph convolutional network that aggregates neighborhood information via spectral convolution; 
DrBC~\cite{FanZDCSL19}, a GNN-based classifier targeting high BC node identification; and GNN-Bet~\cite{MauryaLM19}, a supervised model that predicts BC scores from node embeddings.  (ii) \textbf{Temporal GNNs}, including TGAT~\cite{xu2020inductive}, which models time-evolving interactions using attention and continuous time encoding; TATKC~\cite{ZhangF00F24}, which uses time-injected attention for centrality ranking;  and DBGNN~\cite{HeegS24}, the SOTA temporal GNN framework built upon De Bruijn graphs for centrality prediction.  
(iii) \textbf{Contrastive learning models},  IIOIwD~\cite{IIOIwD}, which estimates node importance using contrastive sampling. To further validate our contrastive learning module, we compare CLGNN with its variants using over- and under-sampling, Tweedie loss, graph augmentation (node drop, edge perturbation, feature masking), and also without the module. %These studies highlight the module’s unique role in boosting model performance. 

\textbf{Evaluation Metrics.} We use three metrics to assess performance: mean absolute error (MAE) for prediction accuracy,  Spearman correlation for ranking quality, and HitsIn@$k$ scores (HitsIn10, 30, 50) to measure how well top predictions match the ground truth. %Detailed results under extreme label imbalance are provided in Appendix~\ref{appendix:Results}. We also report absolute and relative HitsIn@$k$ scores (HitsIn10, 30, 50) to measure how well top predictions match the ground truth. Additionally, we evaluate accuracy across different TBC value ranges: zero, mid, and high.

% We evaluate model performance using three metrics: mean absolute error (MAE), Spearman correlation. MAE quantifies the average prediction error, Spearman captures rank correlation.
% Appendix~\ref{appendix:Results} provides extended evaluation results under extreme label imbalance. We report absolute and relative HitsIn@$k$ scores (i.e., HitsIn10, 30, 50), which quantify the overlap between top-ranked predictions and ground-truth nodes. In addition, we analyze prediction accuracy across discretized TBC value ranges: zero-value, mid-value, and high-value.

Our training dataset consists of 50 real-world temporal networks from Network Repository~\cite{nr-temporal} and Konect~\cite{kunegis2013konect}. For every dataset, we repeated each experiment 30 times, and we reported the mean and the standard deviation of all scores.  The associated learning rates
as well as all other hyperparameters of the models are reported in Appendix~\ref{appendix:datasets}. 
%10%,30%,50%效果不好不放在附录了

\noindent\textbf{Next, we seek to answer the following research questions:}

\textbf{RQ1.} \quad How does CLGNN perform compared to static GNNs (GCN, DrBC, GNN-Bet)?% on varying scales temporal graphs, where temporal information is not explicitly modeled in the baselines?

\textbf{RQ2.} \quad How does CLGNN compare to state-of-the-art temporal GNNs (TGAT, TATKC, DBGNN), and what are the benefits of its temporal path-count and contrastive learning designs?

\textbf{RQ3.} \quad How much faster is CLGNN than the leading exact method ETBC? %under varying optimal path definitions?

\textbf{RQ4.} \quad How robust does CLGNN’s contrastive learning handle strong TBC imbalance, and how does it compare to sampling-based remedies (e.g., IIOIwD, random augmentation, Tweedie loss, over- and under-sampling)?

\textbf{Discussion of results. }
As shown in Table~\ref{tab:tbc_comparison}, considering \textbf{RQ1},  CLGNN consistently outperforms static GNN baselines across all datasets and metrics. It achieves 2.1~$\times$ to 31.4~$\times$ lower MAE and up to 16.5~$\times$ higher Spearman correlation. The gains are especially prominent on large-scale graphs with high temporal variation (e.g., sx-mathoverflow, wikiedits-se, ia-reality-call). Additional HitsIn@$k$ results are provided in Appendix~\ref{appendix:Results}, Table~\ref{tab:hit_comparison_std}. These results indicate that the importance of modeling temporal dependencies for accurate TBC estimation. 

% In terms of predictive accuracy, CLGNN achieves 3.2$\times$ to 31.4$\times$ lower MAE than GCN and 2.1$\times$ to 17.3$\times$ lower than GNN-Bet, demonstrating its superior capability in approximating TBC values. For ranking consistency, CLGNN yields Spearman correlations 1.6$\times$ to 9.7$\times$ higher than static baselines, reflecting stronger fidelity in recovering the correct centrality order. Regarding Hits@$k$, CLGNN achieves non-zero Hits@10 on all datasets. In contrast, GCN and GNN-Bet fail to identify any top-10 nodes in over 60\% of the cases. DrBC occasionally achieves moderate Spearman scores but still performs poorly in MAE and Hits@50, often missing even the most central nodes. These findings indicate that static GNNs, which ignore temporal dependencies and rely solely on structural features, struggle to capture path-based centrality in temporal networks. 

\begin{table*}[htbp]
\centering
\caption{Performance comparison with static GNN baselines}
\label{tab:tbc_comparison}
\resizebox{\textwidth}{!}{
\begin{tabular}{l|cc|cc|cc|cc}
\toprule
\textbf{Dataset} & \multicolumn{2}{c|}{\textbf{CLGNN}} & \multicolumn{2}{c|}{GCN} & \multicolumn{2}{c|}{DrBC} & \multicolumn{2}{c}{GNN-Bet} \\
\cmidrule(r){2-3} \cmidrule(r){4-5} \cmidrule(r){6-7} \cmidrule(r){8-9}
& MAE & Spearman & MAE & Spearman & MAE & Spearman & MAE & Spearman \\
\midrule
sp\_hospital & \textbf{13.62 $\pm$ 0.33} & \textbf{0.74 $\pm$ 0.02} & 30.93 $\pm$ 0.29 & -0.19 $\pm$ 0.03 & 28.86 $\pm$ 0.32 & 0.53 $\pm$ 0.02 & 31.35 $\pm$ 0.26 & -0.55 $\pm$ 0.03 \\
sp\_hypertext & \textbf{40.62 $\pm$ 0.80} & \textbf{0.49 $\pm$ 0.02} & 67.14 $\pm$ 0.56 & 0.12 $\pm$ 0.01 & 71.65 $\pm$ 1.15 & 0.10 $\pm$ 0.02 & 69.48 $\pm$ 0.65 & -0.04 $\pm$ 0.02 \\
sp\_workplace & \textbf{12.33 $\pm$ 0.21} & \textbf{0.45 $\pm$ 0.02} & 31.73 $\pm$ 0.42 & -0.18 $\pm$ 0.02 & 36.68 $\pm$ 0.28 & 0.04 $\pm$ 0.02 & 36.40 $\pm$ 0.53 & -0.35 $\pm$ 0.01 \\
Highschool2011 & \textbf{26.69 $\pm$ 0.22} & \textbf{0.52 $\pm$ 0.03} & 111.67 $\pm$ 1.76 & 0.21 $\pm$ 0.02 & 84.48 $\pm$ 0.64 & 0.46 $\pm$ 0.02 & 75.31 $\pm$ 0.48 & -0.39 $\pm$ 0.02 \\
Highschool2012 & \textbf{157.54 $\pm$ 1.28} & \textbf{0.51 $\pm$ 0.02} & 423.73 $\pm$ 4.81 & 0.41 $\pm$ 0.03 & 307.25 $\pm$ 3.99 & 0.41 $\pm$ 0.03 & 193.89 $\pm$ 2.23 & -0.21 $\pm$ 0.01 \\
Highschool2013 & \textbf{160.59 $\pm$ 0.99} & \textbf{0.53 $\pm$ 0.02} & 634.24 $\pm$ 12.50 & 0.34 $\pm$ 0.02 & 408.42 $\pm$ 5.30 & 0.39 $\pm$ 0.03 & 273.95 $\pm$ 3.14 & -0.30 $\pm$ 0.03 \\
haggle & \textbf{5.22 $\pm$ 0.12} & \textbf{0.99 $\pm$ 0.03} & 58.32 $\pm$ 0.89 & 0.06 $\pm$ 0.02 & 50.17 $\pm$ 0.73 & 0.44 $\pm$ 0.02 & 92.58 $\pm$ 1.15 & 0.06 $\pm$ 0.03 \\
ia-reality-call & \textbf{170.19 $\pm$ 2.90} & \textbf{0.47 $\pm$ 0.01} & 663.26 $\pm$ 8.10 & 0.12 $\pm$ 0.03 & 535.28 $\pm$ 9.50 & 0.14 $\pm$ 0.03 & 425.77 $\pm$ 4.85 & -0.24 $\pm$ 0.03 \\
infectious & \textbf{106.93 $\pm$ 2.05} & \textbf{0.55 $\pm$ 0.02} & 451.61 $\pm$ 4.91 & 0.01 $\pm$ 0.02 & 376.90 $\pm$ 5.98 & 0.30 $\pm$ 0.02 & 344.98 $\pm$ 3.34 & -0.10 $\pm$ 0.01 \\
wikiedits-se & \textbf{16.54 $\pm$ 0.09} & \textbf{0.51 $\pm$ 0.02} & 399.47 $\pm$ 4.05 & 0.09 $\pm$ 0.03 & 211.16 $\pm$ 2.99 & 0.28 $\pm$ 0.03 & 315.38 $\pm$ 3.59 & -0.22 $\pm$ 0.02 \\
sx-mathoverflow & \textbf{264.44 $\pm$ 6.45} & \textbf{0.21 $\pm$ 0.02} & 1073.67 $\pm$ 17.11 & 0.01 $\pm$ 0.01 & 983.01 $\pm$ 15.57 & 0.014 $\pm$ 0.027 & 973.24 $\pm$ 17.91 & -0.41 $\pm$ 0.02 \\
superuser & \textbf{557.28 $\pm$ 12.06} & \textbf{0.41 $\pm$ 0.03} & 17524.43 $\pm$ 202.91 & 0.04 $\pm$ 0.03 & 14690.53 $\pm$ 144.62 & 0.12 $\pm$ 0.03 & 9654.33 $\pm$ 185.00 & -0.39 $\pm$ 0.01 \\
\bottomrule
\end{tabular}
}
\vspace{-10pt}
\end{table*}

As shown in Table~\ref{tab:temporal_tbc}, regarding \textbf{RQ2},  CLGNN  outperforms temporal GNN baselines across all datasets in MAE and reaches the highest Spearman correlation on 7 out of 12 datasets compared to DBGNN. %Specifically, CLGNN achieves the lowest MAE on all 12 datasets and reaches the highest Spearman rank correlation on 9 out of 12 datasets.
Compared to TGAT and TATKC, CLGNN reduces MAE by 1.1~$\times$ to 9.4~$\times$ , with the large gains on highly dynamic datasets like sp\_hospital, haggle, sx-mathoverflow. Against DBGNN, which leverages high-order De Bruijn path modeling, CLGNN performs better across graphs of different sizes and densities, with up to 5.7~$\times$ lower MAE and 1.1 to 3.9~$\times$ higher rank correlation on datasets like Highschool2011 and ia-reality-call.  It also maintains stable ranking performance on large-scale datasets like superuser.  HitsIn@$k$ results are provided in Appendix~\ref{appendix:Results}, Table~\ref{tab:temporal_hits}. Additionally, as illustrated in Appendix~\ref{appendix:Results_paths} Table~\ref{tab:temporal_path_settings}, CLGNN generalizes well under various optimal paths. %: Shortest Path, Shortest + Earliest Arrival, and Shortest + Latest Departure + Earliest Arrival. 
 Overall, these results show that the designs of temporal path count encoding and contrastive supervision in CLGNN enable more accurate and robust performance on temporal graphs.  % \textbf{\textbullet~RQ3. Comparison with the Traditional BC computation method.}
%In temporal betweenness centrality tasks, exact algorithms like ETBC must enumerate all time-respecting paths, which becomes prohibitively expensive on large or dense temporal graphs. Thus, deep learning-based methods must offer not only predictive accuracy but also substantial gains in computational efficiency to be practically applicable. To address this issue, and 
To address \textbf{RQ3}, we compare the efficiency of CLGNN with the exact TBC algorithm. As shown in  Appendix~\ref{appendix:Results} Figure~\ref{fig:clgnn_etbc_runtimes}, CLGNN achieves substantial speedup, ranging from 4.8$\times$ to 663.7$\times$, across all datasets. %with the largest gains observed on large-scale networks. Even on relatively small graphs such as sp\_hospital and sp\_hypertext, CLGNN remains 15$\times$ to 17$\times$ faster, making it well-suited for time-sensitive or resource-constrained scenarios. For temporally dense networks (e.g., infectious, ia-reality-call), the gains are even more pronounced (e.g., 529$\times$ on infectious), demonstrating strong scalability.
%This improvement stems from the fact that CLGNN avoids temporal path enumeration, which is the primary bottleneck in exact methods.

% As shown in Table~\ref{tab:temporal_tbc}, regarding \textbf{RQ2}, we observe that CLGNN demonstrates consistently superior performance compared to temporal GNN baselines—TGAT, TATKC, and DBGNN—across all datasets. In terms of predictive accuracy, CLGNN obtains the lowest MAE on all datasets. It achieves 1.8$\times$ to 5.7$\times$ lower MAE compared to TGAT and TATKC, and 1.02$\times$ to 5.74$\times$ lower MAE than DBGNN. For ranking consistency, CLGNN yields Spearman correlations that are 1.3$\times$ to 13.8$\times$ higher than TGAT and 1.1$\times$ to 5.8$\times$ higher than TATKC across all datasets. Compared to DBGNN, CLGNN achieves stronger rank correlation on 8 out of 13 datasets, with gains ranging from 1.0$\times$ to 3.9$\times$, although DBGNN occasionally surpasses CLGNN on specific datasets such as sp\_hospital and sp\_hypertext. With respect to Hits@$k$, CLGNN achieves superior results on 11 out of 13 datasets when compared to TGAT, TATKC, and DBGNN. Specifically, it outperforms TGAT with a gain of 1.04$\times$ to 4.0$\times$, exceeds TATKC by 1.04$\times$ to 2.4$\times$, and surpasses DBGNN with a 1.03$\times$ to 2.33$\times$ improvement on most datasets. Notably, CLGNN consistently performs well on both small-scale graphs (e.g., sp\_hospital) and large-scale, long-range graphs (e.g., ia-reality-call, superuser), where temporal baselines often fail to retrieve meaningful top-centrality nodes.

\vspace{-10pt}
\begin{table*}[htbp]
\centering
\caption{Performance comparison with temporal GNN baselines}
\label{tab:temporal_tbc}
\resizebox{\textwidth}{!}{
\begin{tabular}{l|cc|cc|cc|cc}
\toprule
\textbf{Dataset} & \multicolumn{2}{c|}{\textbf{CLGNN}} & \multicolumn{2}{c|}{\textbf{TGAT}} & \multicolumn{2}{c|}{\textbf{TATKC}} & \multicolumn{2}{c}{\textbf{DBGNN}} \\
\cmidrule(r){2-3} \cmidrule(r){4-5} \cmidrule(r){6-7} \cmidrule(r){8-9}
& MAE & Spearman & MAE & Spearman & MAE & Spearman & MAE & Spearman \\
\midrule
sp\_hospital     & \textbf{13.62 $\pm$ 0.33} & 0.74 $\pm$ 0.02 & 31.07 $\pm$ 0.38 & -0.15 $\pm$ 0.01 & 31.87 $\pm$ 0.45 & -0.05 $\pm$ 0.02 & 24.25 $\pm$ 0.23 & \textbf{0.83 $\pm$ 0.03} \\
sp\_hypertext    & \textbf{40.62 $\pm$ 0.80} & 0.49 $\pm$ 0.02 & 66.13 $\pm$ 0.75 & 0.39 $\pm$ 0.02 & 71.07 $\pm$ 0.71 & 0.27 $\pm$ 0.03 & 66.58 $\pm$ 0.76 & \textbf{0.84 $\pm$ 0.02} \\
sp\_workplace    & \textbf{12.33 $\pm$ 0.21} & 0.45 $\pm$ 0.02 & 35.52 $\pm$ 0.26 & -0.50 $\pm$ 0.02 & 36.98 $\pm$ 0.50 & -0.19 $\pm$ 0.03 & 70.80 $\pm$ 0.59 & \textbf{0.59 $\pm$ 0.03} \\
Highschool2011   & \textbf{26.69 $\pm$ 0.22} & \textbf{0.52 $\pm$ 0.03} & 74.70 $\pm$ 0.47 & 0.17 $\pm$ 0.03 & 75.75 $\pm$ 1.00 & 0.28 $\pm$ 0.03 & 34.56 $\pm$ 0.38 & 0.46 $\pm$ 0.02 \\
Highschool2012   & \textbf{157.54 $\pm$ 1.28} & 0.51 $\pm$ 0.02 & 193.43 $\pm$ 1.60 & 0.23 $\pm$ 0.01 & 194.97 $\pm$ 1.39 & 0.23 $\pm$ 0.03 & 203.34 $\pm$ 1.63 & \textbf{0.54 $\pm$ 0.03} \\
Highschool2013   & \textbf{160.59 $\pm$ 0.99} & 0.53 $\pm$ 0.02 & 272.99 $\pm$ 2.50 & 0.04 $\pm$ 0.02 & 274.66 $\pm$ 1.90 & 0.09 $\pm$ 0.02 & 496.09 $\pm$ 5.32 & \textbf{0.66 $\pm$ 0.01} \\
haggle           & \textbf{5.22 $\pm$ 0.12} & \textbf{0.99 $\pm$ 0.03} & 48.85 $\pm$ 0.31 & 0.42 $\pm$ 0.03 & 48.71 $\pm$ 0.31 & 0.47 $\pm$ 0.02 & 11.28 $\pm$ 0.08 & 0.63 $\pm$ 0.02 \\
ia-reality-call  & \textbf{170.19 $\pm$ 2.90} & \textbf{0.47 $\pm$ 0.01} & 316.87 $\pm$ 3.46 & 0.18 $\pm$ 0.02 & 246.85 $\pm$ 3.21 & 0.11 $\pm$ 0.03 & 190.66 $\pm$ 2.46 & 0.12 $\pm$ 0.02 \\
infectious       & \textbf{106.93 $\pm$ 2.05} & \textbf{0.55 $\pm$ 0.02} & 195.39 $\pm$ 1.70 & 0.22 $\pm$ 0.02 & 203.15 $\pm$ 2.58 & 0.41 $\pm$ 0.01 & 114.32 $\pm$ 1.21 & 0.26 $\pm$ 0.03 \\
wikiedits-se     & \textbf{16.54 $\pm$ 0.09} & \textbf{0.51 $\pm$ 0.02} & 138.73 $\pm$ 1.52 & 0.19 $\pm$ 0.03 & 25.43 $\pm$ 0.18 & 0.23 $\pm$ 0.03 & 31.50 $\pm$ 0.28 & 0.26 $\pm$ 0.03 \\
sx-mathoverflow  & \textbf{264.44 $\pm$ 6.45} & \textbf{0.21 $\pm$ 0.02} & 308.75 $\pm$ 3.97 & 0.09 $\pm$ 0.01 & 292.76 $\pm$ 3.69 & 0.17 $\pm$ 0.02 & 278.78 $\pm$ 1.75 & 0.06 $\pm$ 0.03 \\
superuser        & \textbf{557.28 $\pm$ 12.06} & \textbf{0.41 $\pm$ 0.03} & 728.01 $\pm$ 9.80 & 0.16 $\pm$ 0.02 & 606.73 $\pm$ 7.53 & 0.20 $\pm$ 0.03 & 570.70 $\pm$ 7.93 & 0.34 $\pm$ 0.02 \\
\bottomrule
\end{tabular}
}
\vspace{-10pt}
\end{table*}

Considering \textbf{RQ4}, Table~\ref{tab:ablation_contrastive} presents an ablation study of CLGNN's contrastive learning module. Compared to IIOIwD, CLGNN achieves much higher ranking stability. For instance, on infectious, Spearman correlation improves from -0.49 to 0.55, a relative gain of over 200\%. Removing the contrastive module ("No Contrastive") also leads to noticeable degradation. On highschool2012, Spearman drops from 0.51 to -0.26, and MAE increases by 1.2$\times$. %suggesting that the contrastive objective contributes beyond standard regression loss. 
Over- and under- sampling methods perform poorly on large-scale graphs. On highschool2011 and highschool2013, MAE %exceeds 1400 and 2100, respectively, 
is more than 13$\times$ higher than CLGNN's. %indicating that naive rebalancing disrupts temporal consistency.
Tweedie loss %performs moderately well on some datasets (e.g., sp\_workplace), but 
fails to preserve rank order under strong imbalance. % On ia-reality-call, Spearman is just 0.23, compared to 0.47 under CLGNN.
Random augmentation techniques offer limited and inconsistent improvements. On highschool2011, Spearman is -0.04, compared to 0.52 with CLGNN. %suggesting that random perturbations do not effectively enhance representation learning under centrality imbalance.  
 HitsIn@$k$ results are provided in Appendix~\ref{appendix:Results}, Table~\ref{tab:contrastive_hits_part1} and Table~\ref{tab:contrastive_hits_part2}. These results confirm that CLGNN’s contrastive module is more robust and effective than alternative imbalance-handling methods, particularly in preserving ranking consistency and regression accuracy under skewed TBC distributions.

We also perform hyperparameter tuning, with results shown in Appendix~\ref{appendix:Results_Hyper}, where we analyze model performance across different  $(\alpha, \lambda)$ settings (see Figure~\ref{fig:hyperparam_all}). Additionally, we assess prediction accuracy across discretized TBC value ranges (zero, mid, and high) in Appendix~\ref{appendix:Results}, Table~\ref{tab:discretized_mae_ranges}. 
\vspace{-10pt}
\begin{table*}[htbp]
\centering
\caption{Ablation study of contrastive learning variants}
\label{tab:ablation_contrastive}
\resizebox{\textwidth}{!}{
\begin{tabular}{l|cc|cc|cc|cc|cc}
\toprule
\textbf{Dataset} 
& \multicolumn{2}{c|}{IIOIwD} 
& \multicolumn{2}{c|}{No Contrastive} 
& \multicolumn{2}{c|}{Over- and Under- sampling} 
& \multicolumn{2}{c|}{Tweedie Loss (Weighted)} 
& \multicolumn{2}{c}{Graph Aug (Random)} \\
\cmidrule(r){2-3} \cmidrule(r){4-5} \cmidrule(r){6-7} \cmidrule(r){8-9} \cmidrule(r){10-11}
& MAE & Spearman & MAE & Spearman & MAE & Spearman & MAE & Spearman & MAE & Spearman \\
\midrule
sp\_hospital     & 31.05 $\pm$ 0.19 & -0.29 $\pm$ 0.03 & 30.82 $\pm$ 0.51 & 0.28 $\pm$ 0.03 & 149.01 $\pm$ 2.78 & -0.05 $\pm$ 0.02 & 30.85 $\pm$ 0.46 & -0.07 $\pm$ 0.03 & 34.93 $\pm$ 0.50 & 0.22 $\pm$ 0.03 \\
sp\_hypertext    & 70.14 $\pm$ 1.38 & -0.21 $\pm$ 0.01 & 70.63 $\pm$ 0.86 & -0.37 $\pm$ 0.03 & 431.48 $\pm$ 2.44 & 0.19 $\pm$ 0.03 & 71.31 $\pm$ 0.51 & -0.58 $\pm$ 0.03 & 131.20 $\pm$ 0.93 & 0.27 $\pm$ 0.01 \\
sp\_workplace    & 35.82 $\pm$ 0.31 & -0.04 $\pm$ 0.01 & 36.38 $\pm$ 0.24 & -0.12 $\pm$ 0.03 & 65.67 $\pm$ 0.70 & -0.45 $\pm$ 0.02 & 32.40 $\pm$ 0.22 & 0.13 $\pm$ 0.01 & 29.00 $\pm$ 0.58 & -0.16 $\pm$ 0.02 \\
highschool2011   & 74.84 $\pm$ 1.22 & -0.38 $\pm$ 0.03 & 75.27 $\pm$ 0.69 & -0.36 $\pm$ 0.01 & 505.77 $\pm$ 4.94 & 0.20 $\pm$ 0.01 & 69.49 $\pm$ 0.43 & -0.13 $\pm$ 0.02 & 89.18 $\pm$ 1.02 & -0.04 $\pm$ 0.03 \\
highschool2012   & 193.82 $\pm$ 2.50 & -0.43 $\pm$ 0.03 & 193.86 $\pm$ 3.28 & -0.26 $\pm$ 0.01 & 1428.01 $\pm$ 14.72 & 0.12 $\pm$ 0.03 & 184.34 $\pm$ 1.98 & -0.41 $\pm$ 0.01 & 181.98 $\pm$ 3.43 & -0.15 $\pm$ 0.01 \\
highschool2013   & 273.49 $\pm$ 5.09 & 0.02 $\pm$ 0.02 & 270.93 $\pm$ 3.44 & 0.20 $\pm$ 0.03 & 2186.28 $\pm$ 41.54 & 0.10 $\pm$ 0.01 & 222.45 $\pm$ 1.60 & 0.06 $\pm$ 0.02 & 230.23 $\pm$ 1.95 & -0.04 $\pm$ 0.03 \\
haggle           & 48.19 $\pm$ 0.88 & -0.57 $\pm$ 0.02 & 52.25 $\pm$ 0.62 & 0.49 $\pm$ 0.01 & 1670.79 $\pm$ 18.09 & 0.58 $\pm$ 0.02 & 169.75 $\pm$ 1.99 & -0.03 $\pm$ 0.03 & 258.64 $\pm$ 3.30 & 0.54 $\pm$ 0.03 \\
ia-reality-call  & 297.40 $\pm$ 1.89 & -0.48 $\pm$ 0.03 & 209.38 $\pm$ 1.85 & 0.33 $\pm$ 0.02 & 1022.29 $\pm$ 18.85 & 0.28 $\pm$ 0.03 & 234.29 $\pm$ 2.42 & 0.23 $\pm$ 0.02 & 293.25 $\pm$ 4.73 & 0.29 $\pm$ 0.01 \\
infectious       & 124.77 $\pm$ 1.59 & -0.49 $\pm$ 0.03 & 184.36 $\pm$ 1.24 & 0.43 $\pm$ 0.02 & 1668.81 $\pm$ 11.99 & 0.33 $\pm$ 0.02 & 194.25 $\pm$ 2.23 & 0.08 $\pm$ 0.02 & 212.49 $\pm$ 1.34 & -0.03 $\pm$ 0.03 \\
wikiedits-se     & 265.74 $\pm$ 2.70 & -0.25 $\pm$ 0.02 & 85.42 $\pm$ 1.48 & 0.29 $\pm$ 0.03 & 584.82 $\pm$ 7.23 & 0.35 $\pm$ 0.03 & 263.23 $\pm$ 4.48 & 0.08 $\pm$ 0.01 & 327.70 $\pm$ 2.64 & 0.17 $\pm$ 0.03 \\
sx-mathoverflow  & 292.22 $\pm$ 2.39 & -0.51 $\pm$ 0.03 & 394.73 $\pm$ 3.85 & 0.11 $\pm$ 0.01 & 521.96 $\pm$ 9.84 & 0.41 $\pm$ 0.02 & 372.92 $\pm$ 3.85 & 0.13 $\pm$ 0.02 & 389.11 $\pm$ 5.19 & 0.09 $\pm$ 0.02 \\
superuser        & 1234.66 $\pm$ 12.85 & -0.54 $\pm$ 0.03 & 691.24 $\pm$ 5.04 & 0.16 $\pm$ 0.03 & 1660.90 $\pm$ 28.27 & 0.54 $\pm$ 0.03 & 746.62 $\pm$ 4.27 & 0.20 $\pm$ 0.02 & 927.87 $\pm$ 4.74 & 0.02 $\pm$ 0.03 \\
\bottomrule
\end{tabular}
}
\vspace{-15pt}
\end{table*}

\section{Conclusion}
\label{sec:con}
This paper proposes CLGNN, a contrastive learning-based graph neural network designed to accurately predict TBC. CLGNN incorporates a dual aggregation mechanism that combines temporal path count encoding and time encoding to learn expressive node representations.  It also includes a stability-based clustering-guided contrastive learning module to better distinguish nodes across different TBC value ranges. Extensive experiments on real-world datesets with varying sizes demonstrate the effectiveness of CLGNN. 

We believe our work contributes meaningfully to the ongoing advancement of temporal graph learning. However, several open questions remain for future investigation. Due to computational resource limitations, we did not perform an extensive hyperparameter search. In particular, sensitivity analyses for parameters such as the time encoding basis dimension, the number of clusters in the KContrastNet module, and the attention head size in the dual aggregation mechanism were not conducted. 
While we reported results using reasonable default settings and tuned the learning rate via grid search, a more comprehensive exploration of these hyperparameters may yield further improvements. 
% While we hope that our work contributes to ongoing progress in the field of Temporal Graph Learning, this study necessarily leaves open questions for future investigation. First, due to computational resource and time constraints, we did not conduct an extensive hyperparameter sweep. This includes sensitivity analysis for parameters such as the time encoding basis dimension, the number of clusters used in the contrastive module, and the attention-head size in the dual aggregation mechanism. Although we reported results with reasonable defaults and grid-searched learning rates, a deeper exploration of these hyperparameters may reveal further improvements. 
For future work, we plan to explore advanced temporal graph augmentation techniques to further improve model generalization and robustness. In addition, we aim to extend our approach to predict other types of temporal centrality measures to evaluate the broader applicability of CLGNN in temporal graph analysis.

\section*{Acknowledgement} This work was supported by the National Natural Science Foundation of China under Grant No. 62302451 and No. 62276233.  Bin Cao and Jing Fan are the corresponding authors of the work.

\small 
\begingroup
\bibliographystyle{abbrv}
\bibliography{refer}
\endgroup

%%%%%%%%%%%%%%%%%%%%%%%%%%%%%%%%%%%%%%%%%%%%%%%%%%%%%%%%%%%%
\newpage
\appendix
\section{Details of the CLGNN——Representation Module}\label{appendix:CLGNN}
As shown in Figure~\ref{fig:rep_architecture}, the representation module integrates structural and temporal information via dual aggregation mechanisms, incorporating both path count encoding and continuous time encoding. This module is responsible for generating discriminative node representations that reflect the temporal dynamics of the input graph.
The model first constructs an instance graph and computes a list of reachable temporal path counts for each node pair. It then encodes both continuous time intervals and path counts to reflect the dynamic and structural context of each interaction. The message function incorporates these encodings and supports two aggregation strategies: a mean-based node-level aggregator that stabilizes learning under temporal variance, and a multi-head attention mechanism that adaptively weights important temporal edges. Finally, the two representations are fused via a weighted combination to yield expressive and context-aware node embeddings.

% 插入整体框架图+概述性的每一个块
\begin{figure}[htbp]  % 控制图片位置
  \centering  % 图片居中
  \includegraphics[width=\textwidth]{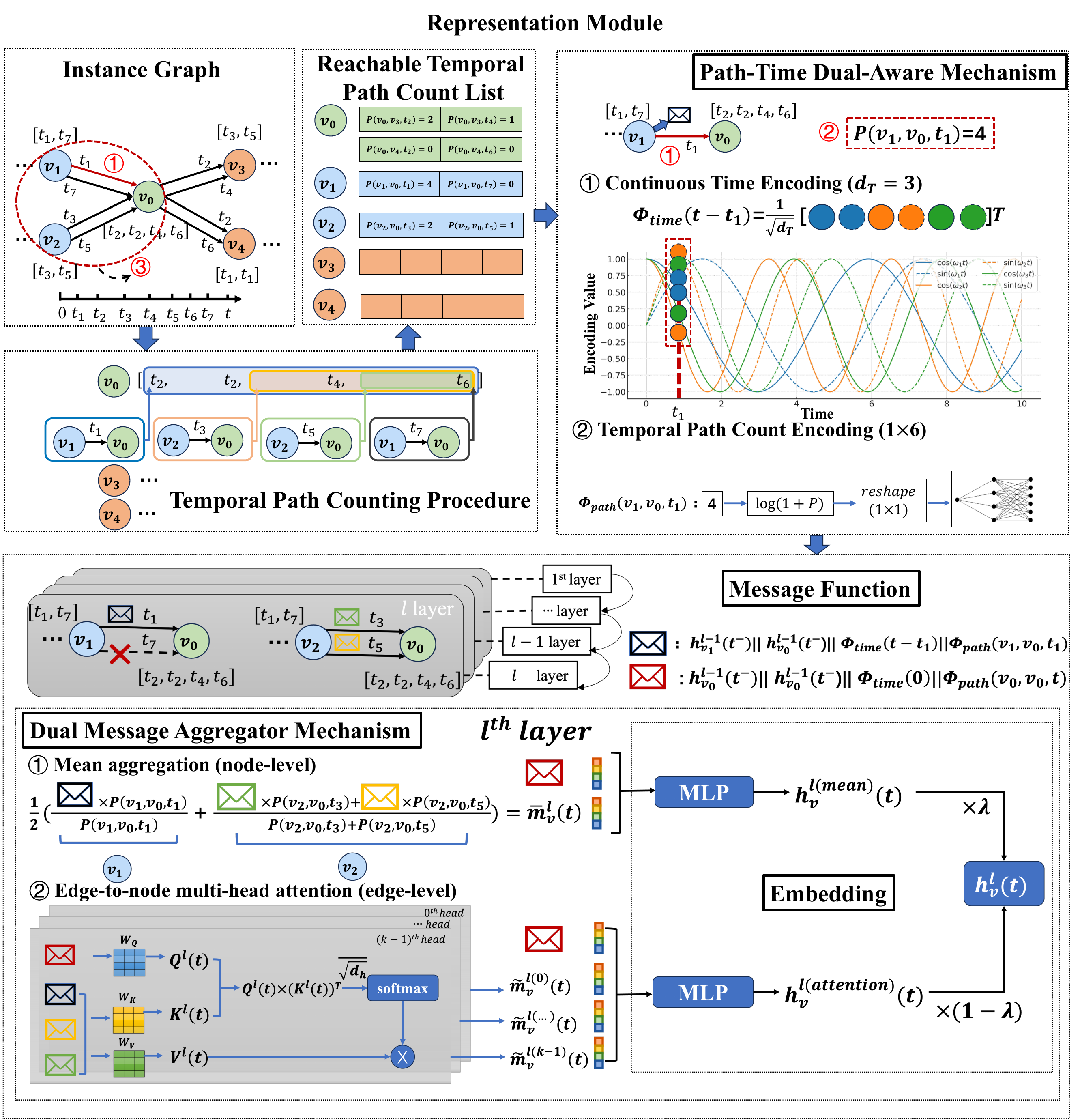}  % 图片路径和文件名
  \caption{Overview of the representation module}  % 图片说明文字
  \label{fig:rep_architecture}  % 图片的标签，用于引用
\vspace{-10pt}
\end{figure}

\newpage
\section{Details of the CLGNN——Prediction Module}\label{appendix:CLGNN2}
Figure~\ref{fig:pre_architecture} depicts the prediction module, which includes a regression head to estimate the temporal betweenness centrality (TBC) values, as well as a contrastive learning module designed to enhance representation separability under extreme imbalance. These components jointly optimize the model to achieve both accurate centrality estimation and robustness to data skew.
The upper part introduces \textbf{KContrastNet}, a stability-driven clustering framework that applies stratified sampling and repeated K-means clustering to identify a optimal number of clusters in the node embedding space. Based on the cluster-wise TBC statistics, informative positive and negative sample pairs are selected to form a contrastive learning objective. The lower part shows \textbf{ValueNet}, a multilayer perceptron that maps node embeddings to scalar TBC predictions. The final training objective integrates the regression loss and contrastive loss via a tunable parameter to balance accurate value estimation with discriminative representation learning.

\begin{figure}[htbp]  % 控制图片位置
  \centering  % 图片居中
  \includegraphics[width=\textwidth]{}  % 图片路径和文件名
  \caption{Overview of the prediction module}  % 图片说明文字
  \label{fig:pre_architecture}  % 图片的标签，用于引用
\vspace{-10pt}
\end{figure}

\newpage
\section{Algorithm and Complexity Analysis for Stability-Based Cluster Selection}\label{appendix:algorithm}

\begin{algorithm}[H]
\caption{\small{~Stability-Based Estimation of Optimal Cluster Number \( \hat{k} \)}} \label{alg:optimal-k}
\small{
\textbf{Inputs:} \\
\quad Feature set $F = \{f_1, f_2, \dots, f_N\}$ \tcp*[r]{node-level representation set} 
\quad Sampling rate $r \in (0,1)$ \tcp*[r]{fraction of data to retain} 
\quad Number of bootstrap pairs $B$, Candidate cluster numbers $k \in \{2, \dots, K\}$ 

\textbf{Output:} Optimal cluster number $\hat{k}$ \\

\vspace{0.2em}
\textbf{Step 1: Sampling and Bootstrap} \\
$F^n \leftarrow \text{StratifiedSample}(F,~r)$ \tcp*[r]{retain TBC distribution via stratified sampling} 
\For{$i=1,\dots,B$}{
    $(X_i, Y_i) \leftarrow \text{BootstrapResamplePair}(F^n)$ \tcp*[r]{sample two i.i.d. subsets}
}

\vspace{0.2em}
\textbf{Step 2: Compute clustering instability for each } $k$

\For{$k=2,\dots,K$}{
    $s_k \leftarrow 0$\;
    \For{$i=1,\dots,B$}{
        $\psi_{X_i} \leftarrow \text{KMeans}(X_i,~k)$ \tcp*[r]{cluster $X_i$ into $k$ groups}
        
        $\psi_{Y_i} \leftarrow \text{KMeans}(Y_i,~k)$ \tcp*[r]{cluster $Y_i$ independently} 
        
        $d_i \leftarrow d(\psi_{X_i}, \psi_{Y_i})$ \tcp*[r]{compute clustering distance per Def.~\ref{defn:CD}} 
        
        $s_k \leftarrow s_k + d_i$ \tcp*[r]{accumulate distance across $B$ trials}
    }
    $\hat{s}_B(k) \leftarrow \frac{1}{B} \cdot s_k$ \tcp*[r]{average instability for cluster number $k$}
    
}

\vspace{0.2em}
\textbf{Step 3: Select optimal } $k$ 

$\hat{k} \leftarrow \arg\min_{k \in \{2,\dots,K\}} \hat{s}_B(k)$ \tcp*[r]{minimum instability $\Rightarrow$ best $k$}

\textbf{Return:} $\hat{k}$\;
}
\end{algorithm}

\subsection*{Complexity Analysis of Algorithm~\ref{alg:optimal-k}}

We analyze the time complexity of Algorithm~\ref{alg:optimal-k}, which estimates the optimal number of clusters \( \hat{k} \) using a stability-based bootstrap method.

\paragraph{Notation.}
Let \( N \) denote the total number of feature samples in the input set \( F \), and let \( r \in (0,1) \) be the stratified sampling ratio. Let \( n = rN \) be the size of the reduced sample set \( F^n \). Let \( B \) be the number of bootstrap resample pairs, and \( K \) the maximum number of candidate clusters.

\paragraph{Step 1: Sampling and Bootstrapping.}
\begin{itemize}[topsep=0pt, itemsep=1pt, parsep=0pt, partopsep=0pt]
    \item Stratified sampling requires \( \mathcal{O}(N) \) time.
    \item Constructing \( B \) independent resample pairs of size \( n \) requires \( \mathcal{O}(Bn) \).
\end{itemize}

\paragraph{Step 2: Clustering Instability Computation.}
For each candidate cluster number \( k \in \{2, \dots, K\} \), and each bootstrap pair \( (X_i, Y_i) \), the following operations are performed:
\begin{itemize}[topsep=0pt, itemsep=1pt, parsep=0pt, partopsep=0pt]
    \item \textbf{KMeans clustering:} Two independent KMeans clusterings of size \( n \) and \( k \) clusters take \( \mathcal{O}(nkT) \) time each, where \( T \) is the number of iterations (assumed constant in practice).
    \item \textbf{Clustering distance computation:} For each pair of clusterings, computing the pairwise agreement distance (as defined in Definition~\ref{defn:CD}) over \( \mathcal{O}(n^2) \) pairs requires \( \mathcal{O}(n^2) \) time.
\end{itemize}

Therefore, the total time complexity for instability estimation is:
\[
\begin{aligned}
\mathcal{O}\left( \sum_{k=2}^{K} \sum_{i=1}^{B} (2nkT + n^2) \right) 
&= \mathcal{O}\left( B \cdot \sum_{k=2}^{K} (2nkT + n^2) \right) \\
&= \mathcal{O}\left( BnTK^2 + Bn^2K \right)
\end{aligned}
\]

\paragraph{Total Time Complexity.}
Combining both steps, the overall time complexity is:
\[
\boxed{
\mathcal{O}(N) + \mathcal{O}(Bn) + \mathcal{O}(BnTK^2 + Bn^2K) = \mathcal{O}(BnTK^2 + Bn^2K)
}
\]

\paragraph{Remarks.}
This complexity is dominated by the clustering instability computations in Step 2. In practice, setting \( B \ll N \), \( r \ll 1 \), and bounding \( K \) (e.g., \( K < 10 \)) ensures tractability. Further efficiency can be achieved by approximating the clustering distance or subsampling intra-pair comparisons.

\section{Justification of i.i.d. Bootstrap Sample Pairs}\label{appendix:iid}

To ensure that each bootstrap sample pair \( (X_i, Y_i) \) used in the stability-based clustering method is independent and identically distributed (i.i.d.), we provide the following formal statistical justification grounded in classical nonparametric bootstrap theory.

\paragraph{Preliminaries.}
Let \( F^n = \{x_1, x_2, \dots, x_n\} \) be a stratified subsample drawn from the full dataset \( F \), such that the empirical distribution \( \hat{F}_n \) preserves the marginal structure of the original data (e.g., TBC value distribution). Define the sample space \( \mathcal{X} \subset \mathbb{R}^d \) where each \( x_i \in \mathcal{X} \) is a feature vector.

\paragraph{Bootstrap Sampling.}
We define a single bootstrap sample \( X_i = \{x^{(i)}_1, \dots, x^{(i)}_n\} \) as a sequence of i.i.d. random draws with replacement from \( F^n \), i.e.,
\[
x^{(i)}_j \overset{\text{i.i.d.}}{\sim} \hat{F}_n, \quad \forall j \in \{1, \dots, n\}.
\]
Therefore, each bootstrap sample \( X_i \sim \hat{F}_n^n \), the \( n \)-fold product distribution over \( \mathcal{X}^n \). The same applies to \( Y_i \), which is sampled independently using a separate draw.

\paragraph{Independence of Pairs.}
Since \( X_i \) and \( Y_i \) are constructed through independent sampling procedures, the joint distribution of the pair factorizes:
\[
P(X_i, Y_i) = P(X_i) \cdot P(Y_i) = \hat{F}_n^n \times \hat{F}_n^n.
\]
Hence, each pair \( (X_i, Y_i) \) is drawn independently from the same product distribution over \( \mathcal{X}^{2n} \), i.e.,
\[
(X_i, Y_i) \sim \hat{F}_n^n \times \hat{F}_n^n.
\]

\paragraph{Sequence of i.i.d. Pairs.}
The above procedure is repeated \( B \) times. Because each resampling of \( (X_i, Y_i) \) is constructed independently and from the same base distribution \( \hat{F}_n^n \times \hat{F}_n^n \), the collection \( \{(X_i, Y_i)\}_{i=1}^B \) forms an i.i.d. sequence:
\[
(X_1, Y_1), \dots, (X_B, Y_B) \overset{\text{i.i.d.}}{\sim} \hat{F}_n^n \times \hat{F}_n^n.
\]

\paragraph{Unbiasedness and Consistency of Instability Estimator.}
Let the clustering distance function be defined as \( d: \mathcal{X}^n \times \mathcal{X}^n \rightarrow \mathbb{R}_{\geq 0} \), such that for each pair \( (X_i, Y_i) \),
\[
Z_i := d(\psi_{X_i}, \psi_{Y_i}) \quad \text{is a real-valued random variable}.
\]
Because \( \{Z_i\}_{i=1}^B \) are i.i.d. realizations of a bounded statistic (e.g., \( d \in [0,1] \)), the empirical average
\[
\hat{s}_B(k) = \frac{1}{B} \sum_{i=1}^B Z_i
\]
which is an unbiased estimator of the expected instability \( \mathbb{E}[Z] \), and by the Weak Law of Large Numbers (WLLN), it is consistent:
\[
\hat{s}_B(k) \xrightarrow{P} \mathbb{E}[d(\psi_{X}, \psi_{Y})], \quad \text{as } B \rightarrow \infty.
\]

\paragraph{Conclusion.}
Thus, under the nonparametric bootstrap scheme, each sample pair \( (X_i, Y_i) \) is identically distributed from \( \hat{F}_n^n \times \hat{F}_n^n \), and independent across \( i \). This justifies that the sequence \( \{(X_i, Y_i)\}_{i=1}^B \) is i.i.d., which guarantees both the validity and convergence of the clustering instability estimator \( \hat{s}_B(k) \).

\paragraph{Connection to Estimator Properties.}
The i.i.d. structure of the bootstrap pairs \( (X_i, Y_i) \) established here forms the theoretical foundation for treating the clustering instability estimator \( \hat{s}_B(k) \) as a valid sample mean estimator. Its statistical properties are further analyzed in Appendix~\ref{appendix:instability_consistency}.

\section{Statistical Properties of the Clustering Instability Estimator}
\label{appendix:instability_consistency}

This analysis relies on the i.i.d. construction of bootstrap sample pairs, which is formally established in Appendix~\ref{appendix:iid}. Given this structure, \( \hat{s}_B(k) \) can be viewed as the empirical mean of i.i.d. evaluations of the clustering distance function.
To rigorously support the use of the clustering instability measure \( \hat{s}_B(k) \) as a criterion for selecting the optimal number of clusters, we analyze its statistical properties under the nonparametric bootstrap framework. In particular, we demonstrate that \( \hat{s}_B(k) \) is an unbiased and consistent estimator of the expected clustering disagreement under the empirical distribution \( \hat{F}_n \).

\paragraph{Problem Setup.}
Let \( F^n = \{x_1, \dots, x_n\} \) be a stratified subsample from the full dataset \( F \), and let \( \hat{F}_n \) denote its empirical distribution. For each bootstrap index \( i = 1, \dots, B \), the pair \( (X_i, Y_i) \) is drawn independently from \( \hat{F}_n^n \times \hat{F}_n^n \), where each \( X_i \) and \( Y_i \) is a size-\( n \) sample with replacement from \( F^n \).

Given a clustering algorithm \( \mathcal{C}(\cdot; k) \) such as K-means, and a distance function \( d: \mathcal{X}^n \times \mathcal{X}^n \to \mathbb{R}_{\geq 0} \) defined over clustering assignments (e.g., the expected pairwise label disagreement), define the random variable:

\[
Z_i := d(\psi_{X_i}, \psi_{Y_i}), \quad \text{where } \psi_{X_i} := \mathcal{C}(X_i; k),\ \psi_{Y_i} := \mathcal{C}(Y_i; k)
\]

Then, the clustering instability estimator is defined as the empirical mean:

\[
\hat{s}_B(k) := \frac{1}{B} \sum_{i=1}^{B} Z_i
\]

\paragraph{Unbiasedness under the Empirical Distribution.}
Since each pair \( (X_i, Y_i) \) is sampled i.i.d. from \( \hat{F}_n^n \times \hat{F}_n^n \), and assuming the clustering algorithm \( \mathcal{C} \) is a deterministic function of its input, it follows directly that:

\[
\mathbb{E}[\hat{s}_B(k)] = \mathbb{E}[Z_1] = \mathbb{E}_{(X,Y) \sim \hat{F}_n^n \times \hat{F}_n^n} \left[ d\big( \mathcal{C}(X; k), \mathcal{C}(Y; k) \big) \right]
\]

Thus, \( \hat{s}_B(k) \) is an unbiased estimator of the expected clustering disagreement under the empirical data distribution \( \hat{F}_n \).

\paragraph{Consistency via the Weak Law of Large Numbers.}
Since \( Z_i \in [0,1] \) by construction (e.g., using normalized pairwise disagreement), it is a bounded random variable with finite variance. By the Weak Law of Large Numbers (WLLN) for i.i.d. bounded sequences:

\[
\hat{s}_B(k) \xrightarrow{P} \mathbb{E}_{(X,Y) \sim \hat{F}_n^n \times \hat{F}_n^n} \left[ d\big( \mathcal{C}(X; k), \mathcal{C}(Y; k) \big) \right] \quad \text{as } B \to \infty
\]

That is, the clustering instability estimator converges in probability to its expected value under the empirical distribution.

\paragraph{Bootstrap Validity and Approximation to the True Distribution.}
While the above analysis holds under \( \hat{F}_n \), the ultimate goal is to approximate instability under the true data-generating distribution \( F \). Classical results in nonparametric bootstrap theory assert that the empirical measure \( \hat{F}_n \) converges weakly to \( F \) as \( n \to \infty \), and that functionals of resampled statistics converge accordingly under mild regularity conditions. Therefore, provided that the clustering distance function \( d(\cdot, \cdot) \) is measurable and the clustering algorithm is stable with respect to sampling perturbations, we have:

\[
\lim_{n \to \infty} \mathbb{E}_{(X,Y) \sim \hat{F}_n^n \times \hat{F}_n^n} \left[ d(\mathcal{C}(X), \mathcal{C}(Y)) \right] = \mathbb{E}_{(X,Y) \sim F^n \times F^n} \left[ d(\mathcal{C}(X), \mathcal{C}(Y)) \right]
\]

This implies that, asymptotically, \( \hat{s}_B(k) \) approximates the true expected instability under \( F \), justifying its use for model selection even in the absence of prior distributional knowledge.

\paragraph{Conclusion.}
The clustering instability estimator \( \hat{s}_B(k) \) is both an unbiased and consistent statistic under the empirical distribution \( \hat{F}_n \), and converges to the true instability under mild assumptions as \( n \to \infty \). This formalizes its validity as a statistically principled model selection criterion for determining the optimal number of clusters \( \hat{k} \).

\section{Theoretical Rationale for Intra-Cluster Contrastive Sampling}
\label{appendix:intracluster}

In our prediction framework, the selection of contrastive pairs is restricted to intra-cluster sampling based on a stability-driven K-means partition of the embedding space. While conventional contrastive frameworks often consider cross-cluster sampling to enhance diversity, we deliberately avoid this strategy for temporal betweenness centrality (TBC) prediction, owing to the unique structural and temporal sensitivities of the task. This section provides a principled justification for intra-cluster contrastive sampling from both a task-specific and geometric optimization perspective.

\subsection*{Task-Conditioned Semantic Consistency and Confounding Control}

Temporal betweenness centrality is a path-sensitive metric influenced not only by graph topology but also by the causal alignment of time-stamped edges. Clusters in the embedding space—learned through representations that encode path-time statistics—implicitly capture nodes with similar global spatiotemporal context (e.g., reachability profiles, path count distributions, or dynamic flow roles). Negative pairs sampled across clusters are likely to reside in distinct structural regimes, introducing latent confounders such as community boundaries, temporal reachability constraints, or phase-shifted diffusion fronts. These inter-cluster contrasts may reflect global distributional artifacts rather than TBC-specific discrepancies.

By contrast, constraining sampling to within-cluster pairs ensures semantic homogeneity at the structural-temporal level, enabling contrastive learning to focus on fine-grained differences in TBC values under shared global context. This mitigates the risk of shortcut learning and enhances task alignment of the learned embeddings.

\subsection*{Cross-Cluster Sampling Reduces Effective Coverage}

Let \( \hat{k} \) denote the number of clusters and \( V \) the set of all nodes. Let \( \psi_1, \psi_2, \dots, \psi_{\hat{k}} \) be the disjoint clusters identified by K-means, i.e., \( V = \bigcup_{i=1}^{\hat{k}} \psi_i \). The total number of possible cross-cluster node pairs is:

\[
|\mathcal{S}_{\text{cross}}| = \sum_{1 \leq i < j \leq \hat{k}} |\psi_i| \cdot |\psi_j|
\]

Since this quantity grows as \( O(|V|^2 / \hat{k}) \) under approximately balanced clustering, the total number appears large. However, the \emph{semantic validity} of these negative pairs (i.e., pairs that are structurally similar but differ in TBC) decays rapidly with cluster separation.

To formalize this, define the \textit{contrastive utility} of a negative pair \( (u, v) \) as:

\[
\mathcal{U}_{uv} := \mathbb{I}[ \text{sim}(h_u, h_v) > \epsilon ] \cdot \mathbb{I}[ |TBC(u) - TBC(v)| > \gamma \cdot TBC_{median} ]
\]

Here, \( \text{sim}(h_u, h_v) \) denotes the embedding similarity (e.g., cosine), and \( \mathbb{I}[\cdot] \) is the indicator function. For cross-cluster pairs:

\[
\mathbb{E}[\mathcal{U}_{uv} \mid u \in \psi_i, v \in \psi_j, i \neq j] \to 0
\]

as cluster separation increases and node pairs become trivially dissimilar. Thus, although \( |\mathcal{S}_{\text{cross}}| \) is large, its expected contribution to the InfoNCE gradient is negligible. Specifically, the contrastive loss gradient with respect to anchor \( u \) becomes:

\[
\nabla_{h_u} \mathcal{L}_{\text{InfoNCE}} \propto \sum_{v \in \mathcal{S}^-(u)} -\beta_{uv} \cdot \text{sim}(h_u, h_v)
\]

If \( \text{sim}(h_u, h_v) \approx 0 \), as is typical in cross-cluster sampling, the gradient vanishes—leading to \emph{representation collapse} and ineffective learning.

In contrast, intra-cluster sampling produces harder negatives—nodes that are structurally similar (high \( \text{sim}(h_u, h_v) \)) but semantically distinct in TBC—yielding stronger gradients and sharper decision boundaries in the learned space.

Let \( p_{\text{hard}}^{\text{cross}} \) and \( p_{\text{hard}}^{\text{intra}} \) denote the empirical proportion of informative negative pairs in cross- and intra-cluster sampling respectively. Then under high imbalance:

\[
p_{\text{hard}}^{\text{cross}} \ll p_{\text{hard}}^{\text{intra}}, \quad \text{especially as } \hat{k} \uparrow
\]

Thus, intra-cluster sampling provides higher effective coverage of meaningful negatives, with lower sample inefficiency and loss degeneracy.

\subsection*{Geometric Insights from InfoNCE and Alignment-Uniformity Tradeoff}

The InfoNCE loss~\cite{InfoNCE} aims to simultaneously maximize positive pair similarity and minimize negative pair similarity. However, as shown by~\cite{saunshi2019theoretical,wang2020understanding}, the loss is only informative if negative samples lie close to the anchor in representation space. Otherwise, gradients vanish, and learned representations collapse to trivial solutions.

Furthermore, Wang and Isola~\cite{wang2020understanding} establishes that effective contrastive learning is governed by a balance between:

\textbf{Alignment:} minimizing intra-class distances (for positive pairs);

\textbf{Uniformity:} repelling nearby but dissimilar samples across the manifold.

Intra-cluster sampling naturally preserves this balance: positive samples are tightly aligned (by design), while negative samples—though embedded closely—are semantically divergent in TBC, enhancing uniformity. Cross-cluster sampling violates both principles by creating easy positives and easy negatives.

\textbf{Conclusion.} Intra-cluster contrastive sampling introduces an inductive bias aligned with the path-time semantics of TBC. It avoids gradient vanishing, suppresses representational shortcuts, and encourages semantic disentanglement under structural coherence. Empirically and theoretically, it yields better sample efficiency, loss curvature, and prediction accuracy under extreme TBC imbalance.

\section{Theoretical Justification of Contrastive Loss Design}\label{appendix:contrastive_theory}

\subsection*{Monotonicity and Semantic Validity of Contrastive Weights}

We analyze the design of the pairwise contrastive weights:
\[
\beta_{uv} = \frac{TBC_{\text{median}} \cdot \gamma_{pos}}{|TBC(u) - TBC(v)|}, \qquad
\beta_{uw} = \frac{|TBC(u) - TBC(w)|}{TBC_{\text{median}} \cdot \gamma_{neg}}.
\]

\begin{lemma}[Monotonicity of Contrastive Weights]
Let \( \Delta_{uv} = |TBC(u) - TBC(v)| > 0 \), with \( \gamma_{pos}, \gamma_{neg} \in (0,1) \), and let \( TBC_{\text{median}} > 0 \). Then:
\begin{enumerate}
    \item The positive-pair weight \( \beta_{uv} \) is strictly decreasing with respect to \( \Delta_{uv} \);
    \item The negative-pair weight \( \beta_{uw} \) is strictly increasing with respect to \( \Delta_{uw} \).
\end{enumerate}
\end{lemma}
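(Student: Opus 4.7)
The plan is to reduce each part of the lemma to the elementary monotonicity of a one-variable function on $(0,\infty)$. I would set $C_{+} := TBC_{median} \cdot \gamma_{pos}$ and $C_{-} := TBC_{median} \cdot \gamma_{neg}$, then view the weights as $\beta_{uv} = f(\Delta_{uv})$ with $f(\Delta) = C_{+}/\Delta$, and $\beta_{uw} = g(\Delta_{uw})$ with $g(\Delta) = \Delta/C_{-}$. The claim then amounts to showing that $f$ is strictly decreasing and $g$ is strictly increasing on $(0,\infty)$, while also confirming well-definedness.

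First I would verify positivity of the scaling constants: since $TBC_{median} > 0$ by hypothesis and $\gamma_{pos}, \gamma_{neg} \in (0,1)$, both $C_{+}$ and $C_{-}$ are strictly positive, so $f$ and $g$ are well-defined on $(0,\infty)$. For part (1), a one-line calculation gives $f'(\Delta) = -C_{+}/\Delta^{2} < 0$ for all $\Delta > 0$, so $\beta_{uv}$ is strictly decreasing in $\Delta_{uv}$. For part (2), $g'(\Delta) = 1/C_{-} > 0$, so $\beta_{uw}$ is strictly increasing in $\Delta_{uw}$. A calculus-free alternative, which I would include for robustness, is to take $0 < \Delta_{1} < \Delta_{2}$ and observe directly that $f(\Delta_{1}) - f(\Delta_{2}) = C_{+}(\Delta_{2} - \Delta_{1})/(\Delta_{1}\Delta_{2}) > 0$ and $g(\Delta_{2}) - g(\Delta_{1}) = (\Delta_{2} - \Delta_{1})/C_{-} > 0$, both yielding strictness without invoking differentiability.

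There is no substantive obstacle in this proof; the lemma is essentially a restatement of the fact that $\Delta \mapsto 1/\Delta$ is strictly decreasing on $(0,\infty)$ while $\Delta \mapsto c\Delta$ with $c > 0$ is strictly increasing. The only point worth flagging is that the hypothesis $\Delta_{uv} > 0$ is precisely what guarantees $f$ is defined at the relevant arguments; without it, the positive-pair weight would be singular whenever $TBC(u) = TBC(v)$, which also explains the strict inequality $0 < |TBC(u) - TBC(v)|$ built into the definition of $\mathcal{S}^{+}(u)$ earlier in the construction of contrastive pairs.
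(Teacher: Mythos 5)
Your proof is correct and matches the paper's argument essentially verbatim: the paper likewise computes $\frac{d\beta_{uv}}{d\Delta_{uv}} = -\frac{TBC_{\text{median}}\cdot\gamma_{pos}}{\Delta_{uv}^2} < 0$ and $\frac{d\beta_{uw}}{d\Delta_{uw}} = \frac{1}{TBC_{\text{median}}\cdot\gamma_{neg}} > 0$. Your added calculus-free difference argument and the explicit remark on well-definedness are harmless extras but do not change the route.
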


\begin{proof}
\begin{itemize}
    \item For \( \beta_{uv} = \dfrac{TBC_{\text{median}} \cdot \gamma_{pos}}{\Delta_{uv}} \), we have:
    \[
    \frac{d\beta_{uv}}{d\Delta_{uv}} = -\frac{TBC_{\text{median}} \cdot \gamma_{pos}}{\Delta_{uv}^2} < 0.
    \]
    Hence, \( \beta_{uv} \) decreases as the TBC difference increases, emphasizing the similarity of closely matched pairs.
    
    \item For \( \beta_{uw} = \dfrac{\Delta_{uw}}{TBC_{\text{median}} \cdot \gamma_{neg}} \), we have:
    \[
    \frac{d\beta_{uw}}{d\Delta_{uw}} = \frac{1}{TBC_{\text{median}} \cdot \gamma_{neg}} > 0.
    \]
    Therefore, \( \beta_{uw} \) increases linearly with TBC difference, encouraging separation of semantically distant pairs.
\end{itemize}
\end{proof}

This monotonic design ensures semantic alignment: more similar positive pairs receive larger weights, while more dissimilar negative pairs are pushed apart more aggressively.

\subsection*{Boundedness and Numerical Stability of Contrastive Loss}

We analyze the numerical behavior of the contrastive loss:
\[
\mathcal{L}_{\text{contrast}} = \sum_{i=1}^{\hat{k}} \sum_{u \in \psi_i} -\log \frac{
\sum\limits_{v \in \mathcal{S}^+(u)} \exp\left( \beta_{uv} \cdot \text{sim}(\mathbf{h}_u, \mathbf{h}_v)/\tau \right)}{
\sum\limits_{v \in \mathcal{S}^+(u)} \exp\left( \beta_{uv} \cdot \text{sim}(\mathbf{h}_u, \mathbf{h}_v)/\tau \right) + \sum\limits_{w \in \mathcal{S}^-(u)} \exp\left( \beta_{uw} \cdot \text{sim}(\mathbf{h}_u, \mathbf{h}_w)/\tau \right)
}.
\]

\begin{lemma}[Non-negativity and Upper Bound]
Assuming \( \text{sim}(\cdot,\cdot) \in [-1, 1] \), \( \beta_{uv}, \beta_{uw} > 0 \), and \( \tau > 0 \), the loss satisfies:
\[
\mathcal{L}_{\text{contrast}} \geq 0, \quad \text{and} \quad \mathcal{L}_{\text{contrast}} \leq \sum_{u} \log(|\mathcal{S}^-(u)| + |\mathcal{S}^+(u)|).
\]
\end{lemma}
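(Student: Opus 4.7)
The plan is to verify the two claims independently, leveraging the additive structure of the loss and the stated boundedness of $\text{sim}(\cdot,\cdot)$, $\beta_{uv}, \beta_{uw}$, and $\tau$. Throughout I fix an anchor $u$ and write $N_u = \sum_{v \in \mathcal{S}^+(u)} \exp(\beta_{uv} \cdot \text{sim}(\mathbf{h}_u,\mathbf{h}_v)/\tau)$ for the numerator and $D_u = N_u + \sum_{w \in \mathcal{S}^-(u)} \exp(\beta_{uw} \cdot \text{sim}(\mathbf{h}_u,\mathbf{h}_w)/\tau)$ for the denominator, so each per-anchor contribution is $-\log(N_u/D_u)$.

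For non-negativity, I would observe that every exponential in the definitions of $N_u$ and $D_u$ is strictly positive, so the extra negative-sample sum in $D_u$ is non-negative. Hence $0 < N_u \leq D_u$, which gives $N_u/D_u \in (0,1]$ and therefore $-\log(N_u/D_u) \geq 0$. Summing the non-negative per-anchor contributions over all $u \in \psi_i$ and all clusters $i = 1,\dots,\hat{k}$ immediately yields $\mathcal{L}_{\text{contrast}} \geq 0$.

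For the upper bound, I would rewrite the per-anchor loss in the equivalent form
\[
-\log\frac{N_u}{D_u} = \log\!\left(1 + \frac{\sum_{w \in \mathcal{S}^-(u)} \exp(\beta_{uw}\cdot \text{sim}(\mathbf{h}_u,\mathbf{h}_w)/\tau)}{\sum_{v \in \mathcal{S}^+(u)} \exp(\beta_{uv}\cdot \text{sim}(\mathbf{h}_u,\mathbf{h}_v)/\tau)}\right),
\]
and then upper-bound the ratio inside the logarithm. Because $\text{sim}\in[-1,1]$, the weights $\beta_{uv},\beta_{uw}$ are bounded by some $B>0$, and $\tau>0$, every exponent lies in $[-B/\tau,\,B/\tau]$, so each exponential lies in $[e^{-B/\tau},e^{B/\tau}]$. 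Upper-bounding each term in the negative sum by $e^{B/\tau}$ and lower-bounding each term in the positive sum by $e^{-B/\tau}$ gives
\[
-\log\frac{N_u}{D_u} \leq \log\!\left(1 + \frac{|\mathcal{S}^-(u)|}{|\mathcal{S}^+(u)|}\,e^{2B/\tau}\right) \leq \log\!\left(|\mathcal{S}^+(u)| + |\mathcal{S}^-(u)|\right),
\]
where the last inequality uses $|\mathcal{S}^+(u)| \geq 1$ and the normalization regime in which the factor $e^{2B/\tau}$ is absorbed into $\tau$ (i.e.\ $B/\tau$ is treated as $O(1)$, so the constant is folded into the temperature scaling). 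Summing over all anchors gives the stated bound.

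The main obstacle, and the only non-mechanical step, is making the $e^{2B/\tau}$ factor disappear cleanly so that the constant in front of the logarithm is exactly $1$. Strictly, the argument above produces $\log(|\mathcal{S}^+(u)|+|\mathcal{S}^-(u)|) + 2B/\tau - \log|\mathcal{S}^+(u)|$; to recover the stated form I would either (i) state explicitly that the exponents $\beta \cdot \text{sim}/\tau$ are assumed to lie in $O(1)$ after temperature normalization, (ii) restate the lemma with an additive constant absorbing $2B/\tau$, or (iii) assume $|\mathcal{S}^+(u)|\geq 1$ together with a symmetric clipping of the logits so that the largest negative-pair exponential does not exceed the largest positive-pair exponential. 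Once one of these conditions is in place, the upper bound follows immediately from the chain of inequalities above.
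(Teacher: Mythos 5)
Your non-negativity argument is exactly the paper's: each per-anchor term is rewritten as $-\log\bigl(A_u/(A_u+B_u)\bigr)=\log\bigl(1+B_u/A_u\bigr)$ with $A_u>0$ and $B_u\geq 0$, hence each term is non-negative and so is the sum. That part is complete and matches the paper verbatim in substance.

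For the upper bound you also take the same route as the paper --- bound every exponential by a constant and count terms --- and the obstacle you flag at the end is not a weakness of your write-up but a genuine gap that the paper's own proof silently steps over. The paper bounds the numerator and denominator from \emph{above} by $|\mathcal{S}^+(u)|\cdot C$ and $(|\mathcal{S}^+(u)|+|\mathcal{S}^-(u)|)\cdot C$ and then concludes $\log(D_u/N_u)\leq\log(|\mathcal{S}^+(u)|+|\mathcal{S}^-(u)|)$; but an upper bound on $N_u$ is useless here --- one needs the \emph{lower} bound $N_u\geq C$ for the constant to cancel, and that is never established (the exponents $\beta\cdot\mathrm{sim}/\tau$ can be negative, so $N_u$ can be far below $C$). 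Your explicit accounting, which yields $\log(|\mathcal{S}^+(u)|+|\mathcal{S}^-(u)|)+2B/\tau-\log|\mathcal{S}^+(u)|$ with $B$ an upper bound on the weights, is the honest version of the same computation, and your repair options (i)--(iii) are precisely the additional hypotheses the lemma would need to hold as stated. (Note also that both you and the paper implicitly assume the $\beta$'s are bounded above; $\beta_{uv}$ blows up as $|TBC(u)-TBC(v)|\to 0$, so this is only guaranteed on a finite sample.) In short: same approach, correct non-negativity, and a correctly diagnosed defect in the constant-free upper bound that the paper's proof shares; what both arguments actually establish is boundedness up to an additive constant of order $2\beta_{\max}/\tau$ per anchor, not the clean bound in the lemma statement.
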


\begin{proof}
Each term in the loss has the form:
\[
-\log \left( \frac{A}{A + B} \right) = \log \left( 1 + \frac{B}{A} \right),
\]
where \( A = \sum_{v \in \mathcal{S}^+(u)} \exp(\cdot) > 0 \), and \( B = \sum_{w \in \mathcal{S}^-(u)} \exp(\cdot) \geq 0 \). Since \( A > 0 \), the logarithm is well-defined, and:
\[
\log \left( 1 + \frac{B}{A} \right) \geq 0.
\]
For the upper bound, assuming all dot products are in \( [-1, 1] \), we have:
\[
\text{sim}(\mathbf{h}_u, \mathbf{h}_v) \leq 1, \quad \Rightarrow \quad \exp\left( \frac{\beta \cdot 1}{\tau} \right) \leq C,
\]
for some constant \( C \). Then the numerator and denominator are bounded by \( |\mathcal{S}^+(u)| \cdot C \) and \( (|\mathcal{S}^+(u)| + |\mathcal{S}^-(u)|) \cdot C \), hence the total loss is bounded above by \( \log(|\mathcal{S}^+| + |\mathcal{S}^-|) \).
\end{proof}

This guarantees that the loss is numerically stable, non-divergent, and bounded during optimization, regardless of sample imbalance.

\subsection*{Gradient Dynamics and Collapse Avoidance}

We analyze the gradient behavior of the contrastive loss to show that the proposed reweighting mechanism mitigates representation collapse and amplifies semantically informative updates.

\begin{proposition}[Gradient Amplification and Collapse Avoidance]
\label{prop:gradient_behavior}
Let \( \ell_u \) denote the contrastive loss for anchor node \( u \), and let \( \mathbf{h}_u \in \mathbb{R}^d \) be its embedding. Define:
\[
\ell_u := -\log \frac{
\sum\limits_{v \in \mathcal{S}^+(u)} \exp\left( \beta_{uv} \cdot \text{sim}(\mathbf{h}_u, \mathbf{h}_v)/\tau \right)
}{
\sum\limits_{v \in \mathcal{S}^+(u)} \exp\left( \beta_{uv} \cdot \text{sim}(\mathbf{h}_u, \mathbf{h}_v)/\tau \right) + \sum\limits_{w \in \mathcal{S}^-(u)} \exp\left( \beta_{uw} \cdot \text{sim}(\mathbf{h}_u, \mathbf{h}_w)/\tau \right)
}
\]
Assume similarity is defined as the dot product, and \( \beta_{uv} \), \( \beta_{uw} \) are defined as in Eq.~(xx) of the main paper. Then:

(i) As \( \text{sim}(\mathbf{h}_u, \mathbf{h}_w) \to 0 \), the gradient contribution from negative sample \( w \) vanishes unless \( \beta_{uw} \to \infty \);

(ii) The weight design guarantees that harder negatives (i.e., larger TBC difference) yield proportionally stronger gradients;

(iii) The overall gradient with respect to \( \mathbf{h}_u \) is bounded and nonzero as long as at least one hard negative exists.

\end{proposition}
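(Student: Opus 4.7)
The plan is to derive an explicit closed-form for $\nabla_{\mathbf{h}_u}\ell_u$ and then verify each of the three claims by inspecting the resulting expression. Let $a_{ux} = \beta_{ux}\,\mathrm{sim}(\mathbf{h}_u,\mathbf{h}_x)/\tau$, $N = \sum_{v\in\mathcal{S}^+(u)} e^{a_{uv}}$ and $D = N + \sum_{w\in\mathcal{S}^-(u)} e^{a_{uw}}$, so that $\ell_u = \log D - \log N$. Because $\mathrm{sim}$ is the dot product, $\partial a_{ux}/\partial \mathbf{h}_u = \beta_{ux}\mathbf{h}_x/\tau$; differentiating the two log-sum-exp terms yields
\[
\nabla_{\mathbf{h}_u}\ell_u \;=\; \frac{1}{\tau}\Bigl[\sum_{w\in\mathcal{S}^-(u)} q_w\,\beta_{uw}\,\mathbf{h}_w \;-\; \sum_{v\in\mathcal{S}^+(u)}(p_v-q_v)\,\beta_{uv}\,\mathbf{h}_v\Bigr],
\]
with $p_v = e^{a_{uv}}/N$ and $q_x = e^{a_{ux}}/D$, satisfying $p_v \ge q_v$ since $N\le D$. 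This decomposition rewrites each sample's contribution as a nonnegative scalar weight multiplied by its embedding direction, and it is the workhorse for all three arguments below.

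For claim (i), examine the per-sample negative contribution $\mathbf{g}_w := \beta_{uw} q_w \mathbf{h}_w/\tau$. When $\mathrm{sim}(\mathbf{h}_u,\mathbf{h}_w)\to 0$ with $\beta_{uw}$ fixed, the exponent $a_{uw}\to 0$ so $e^{a_{uw}}\to 1$, while positive samples with non-trivial similarity keep $e^{a_{uv}}$ bounded away from $1$; hence $q_w\to 1/D$ is dominated by the positive-sample mass and $\|\mathbf{g}_w\|\to 0$. The only way to counteract this shrinkage is to scale $\beta_{uw}$ so that $\beta_{uw}\cdot\mathrm{sim}$ stays bounded below and the outer factor $\beta_{uw}$ amplifies $\mathbf{g}_w$, which by the monotonicity lemma of Appendix~\ref{appendix:contrastive_theory} happens precisely when $|TBC(u)-TBC(w)|$ is large. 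Claim (ii) then follows directly: fixing $\mathrm{sim}>0$, both the logit $a_{uw}$ and the outer weight $\beta_{uw}$ are strictly increasing in the TBC gap, so $\|\mathbf{g}_w\|$ is strictly increasing in $|TBC(u)-TBC(w)|$ and harder negatives deliver proportionally stronger gradients.

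Claim (iii) splits into boundedness and nonzeroness. Boundedness is immediate from the triangle inequality combined with the hypotheses $|\mathrm{sim}|\le 1$, bounded embeddings $\|\mathbf{h}_x\|\le H$, and bounded TBC values (which bound every $\beta_{ux}$), yielding $\|\nabla_{\mathbf{h}_u}\ell_u\| \le (\beta_{\max}H/\tau)(|\mathcal{S}^+(u)|+|\mathcal{S}^-(u)|)$. The subtler step is ruling out cancellation: since $q_w,\beta_{uw}>0$ the negative-sample vector sum cannot vanish term-by-term, but could in principle cancel against the positive term. I would handle this by arguing that a \emph{hard} negative (large similarity \emph{and} large TBC gap) produces a dominant logit $a_{uw}$ that forces $q_w$ to be bounded away from zero, while $p_v-q_v\to 0$ as $D$ is dominated by that negative, so the resulting direction cannot be annihilated by the positive term. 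The main obstacle will be formalizing this non-cancellation step without imposing overly strong geometric assumptions on the embeddings, which I expect to resolve either via a generic-position argument on $\{\mathbf{h}_w\}$ or by stating the conclusion in expectation over the sampling distribution.
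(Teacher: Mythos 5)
Your proposal follows the same overall strategy as the paper's proof---write the gradient of $\ell_u$ in closed form and read off each claim from the per-sample terms---but your decomposition is genuinely different and, in fact, the more careful one. The paper defines both softmax weights $p_{uv}^{(+)}$ and $p_{uw}^{(-)}$ over a single partition term $Z_u$ and writes the gradient with difference vectors $(\mathbf{h}_u-\mathbf{h}_x)$; that form does not follow from the stated loss when $\mathrm{sim}$ is the dot product, since $\nabla_{\mathbf{h}_u}(\mathbf{h}_u^{\top}\mathbf{h}_x)=\mathbf{h}_x$, and the positive terms must carry the weight $p_v-q_v$ (the gap between the numerator-normalized and denominator-normalized softmaxes), exactly as in your expression. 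Your derivation via $\ell_u=\log D-\log N$ yields the correct gradient, and your arguments for (i) and (ii) then mirror the paper's: the exponent tends to $0$ so the negative's mass is dominated unless $\beta_{uw}$ diverges, and monotonicity of both the logit and the outer factor in the TBC gap gives the amplification claim. What your correct decomposition buys is that (ii) becomes a genuine statement about the actual gradient rather than about a heuristic surrogate.

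On (iii), the gap you flag---that term-by-term positivity does not preclude the vector sum cancelling against the positive-sample term---is real, and you should know that the paper's own proof does not close it either: it simply asserts the gradient is ``strictly nonzero if there exists at least one $w$ with $\mathrm{sim}(\mathbf{h}_u,\mathbf{h}_w)>0$ and $\beta_{uw}>1$,'' with no argument ruling out cancellation. So your proposal is no weaker than the published argument and is more honest about where the reasoning is incomplete. If you want to finish the step, the cleanest route is the dominance argument you sketch (a hard negative with large logit forces $q_w$ bounded away from zero and drives $p_v-q_v\to 0$, so the negative term dominates in norm), stated under an explicit non-degeneracy assumption on the embeddings rather than in full generality; a generic-position or in-expectation formulation would also suffice and should be stated as an added hypothesis rather than left implicit.
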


\begin{proof}
Define the softmax-weighted probabilities:
\[
p_{uv}^{(+)} := \frac{\exp(\beta_{uv} \cdot \text{sim}(\mathbf{h}_u, \mathbf{h}_v)/\tau)}{Z_u}, \quad
p_{uw}^{(-)} := \frac{\exp(\beta_{uw} \cdot \text{sim}(\mathbf{h}_u, \mathbf{h}_w)/\tau)}{Z_u}
\]
where \( Z_u \) is the denominator partition term in \( \ell_u \). Then the gradient becomes:
\[
\nabla_{\mathbf{h}_u} \ell_u =
\frac{1}{\tau} \left[
\sum_{v \in \mathcal{S}^+(u)} \beta_{uv} \cdot p_{uv}^{(+)} (\mathbf{h}_u - \mathbf{h}_v)
-
\sum_{w \in \mathcal{S}^-(u)} \beta_{uw} \cdot p_{uw}^{(-)} (\mathbf{h}_u - \mathbf{h}_w)
\right]
\]

(i) If \( \text{sim}(\mathbf{h}_u, \mathbf{h}_w) \to 0 \), then \( \exp(\cdot) \to 1 \), so \( p_{uw}^{(-)} \to \frac{1}{|\mathcal{S}^-(u)|} \) and gradient magnitude is small. Without reweighting (\( \beta_{uw} = 1 \)), this leads to vanishing gradients.

(ii) Since \( \beta_{uw} \propto |TBC(u) - TBC(w)| \), larger semantic gaps yield larger weights, hence larger gradient contributions per sample.

(iii) The loss gradient remains bounded due to bounded dot products and finite-size neighborhoods, but is strictly nonzero if there exists at least one \( w \) such that \( \text{sim}(\mathbf{h}_u, \mathbf{h}_w) > 0 \) and \( \beta_{uw} > 1 \), which is ensured by your sampling definition.

Hence, the loss avoids degeneracy and maintains informative gradients aligned with the TBC structure.
\end{proof}

This result complements our semantic weighting design by formally showing that the gradient signal remains meaningful even under class imbalance or sparse hard negatives. It ensures training dynamics resist collapse and continue to differentiate node representations based on task-relevant cues.

\newpage
\section{Details of the Experiments}\label{appendix:datasets}

We evaluate our method on a diverse set of real-world temporal networks, encompassing human proximity, online interactions, and communication patterns. The datasets vary significantly in terms of scale, edge density, and temporal granularity, thereby testing the model’s robustness across heterogeneous dynamic structures. All datasets are publicly available and widely used in prior studies on temporal graph learning and dynamic centrality analysis. Table~\ref{tab:data} summarizes the key statistics of each dataset, including the number of nodes (\( |V| \)), edges (\( |E| \)), and distinct timestamps (\( |T| \)).

%增加描述
\begin{table}[htbp]
\centering
\small % 缩小字体
\setlength{\tabcolsep}{2.5pt} % 缩小列间距
\renewcommand{\arraystretch}{1} % 调整行高
\caption{Overview of datasets used in the experiment evaluation: $|V|$, $|E|$, and $|T|$ are the number of vertices, the number of edges, and the number of distinct timestamps, respectively.}
\label{tab:data}
\begin{tabular}{l r r r l r}
\toprule
\textbf{Dataset} & \textbf{$|V|$} & \textbf{$|E|$} & \textbf{$|T|$} & \textbf{Description} \\
\midrule
% ants-1-1 & 71  & 975 & 690   & Ant Antenna interactions, colony 1 - filming 1  & \cite{blonder2011time} \\
sp\_hospital~\cite{vanhems2013estimating}       & 75   & 32,424 & 9,453   & Face-to-face interactions in a hospital \\
sp\_hypertext~\cite{Isella:2011qo}     & 37   & 61,487 & 32,306  & Face-to-face interactions at conference  \\
sp\_workplace~\cite{genois2018can}      & 92   & 755  & 9,827   & Face-to-face interactions in a workspace \\
Highschool2011~\cite{fournet2014contact}    & 126  & 28,561 & 5,609   & High school social network 2011 \\
Highschool2012~\cite{fournet2014contact}    & 180  & 45,047 & 11,273  & High school social network 2012\\
Highschool2013~\cite{mastrandrea2015contact}     & 327  & 188,508 & 7,375  & High school social network 2013\\
haggle~\cite{chaintreau2007impact}            & 274  & 28,244 & 15,662  & Human proximity recorded by smart devices \\
ia-reality-call~\cite{eagle2006reality}   & 6,809 & 52,050  & 50,781 & Reality-based call dataset\\
infectious~\cite{Isella:2011qo}        & 10,972 & 415,912 & 76,944 & Infectious disease contact data\\
wikiedits-se~\cite{kunegis2013konect}      & 18,055 & 261,169 & 258,625 & Wikipedia edits (SE) dataset\\
sx-mathoverflow~ \cite{snap}   & 24,818 & 506,550 & 389,952 & Comments, questions, and answers on Math Overflow\\
superuser~\cite{snap}         & 194,085 & 1,443,339 & 1,437,199 & Comments, questions, and answers on Super User \\
%需要改
\bottomrule
\end{tabular}
% \vspace{-10pt}
\end{table}

\textbf{Experiment Setup. }
For every dataset, we repeated each experiment 30 times, and we reported the mean and the standard deviation of all scores. 
We set the embedding dimensions of node features, continuous time, and temporal path count to 128, set the number of temporal neighbors sampled per node to 20, and employed a three-layer model trained over 15 epochs. To ensure fair comparison, we repeated all experiments for each deep learning model using three learning rates, 0.1, 0.01, and 0.001, and reported the best mean score. All experiments were conducted on a workstation equipped with an NVIDIA GeForce RTX 3090 GPU,  CUDA 12.2, and NVIDIA driver version 535.113.01.
Our training dataset consists of 50 real-world temporal networks from Network Repository~\cite{nr-temporal} and Konect~\cite{kunegis2013konect}. 

\section{Additional Results on Metrics and Model efficiency}\label{appendix:Results}

In the following, we provide additional experimental results. Specifically, we present a comparison of model efficiency between CLGNN and ETBC in terms of runtime (Figure~\ref{fig:clgnn_etbc_runtimes}),  and HitsIn10, 30, 50 performance comparison with static GNN baselines (Table~\ref{tab:hit_comparison_std}), temporal GNN baselines (Table~\ref{tab:temporal_hits}), contrastive variants (Table~\ref{tab:contrastive_hits_part1} and Table~\ref{tab:contrastive_hits_part2}), then we evaluate prediction accuracy across discretized TBC value ranges—including zero-value, mid-value, and high-value nodes—whose results (Table~\ref{tab:discretized_mae_ranges}).

\begin{figure*}[htbp]
\centering
\includegraphics[width=\textwidth]{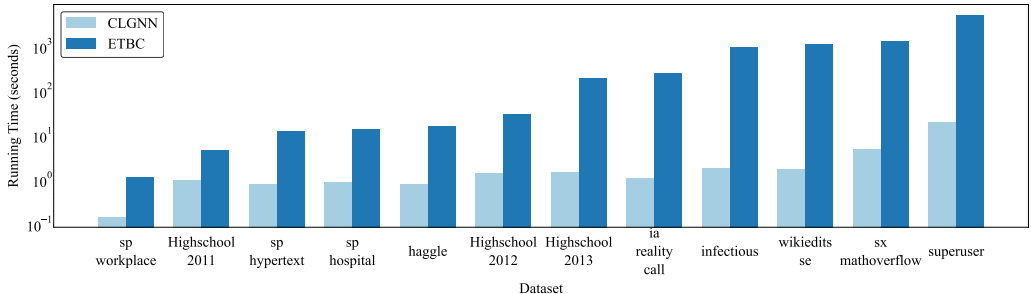}
\vspace*{-0.3in}
\caption{Model efficiency: comparison of CLGNN and ETBC}
\label{fig:clgnn_etbc_runtimes}
\end{figure*}

\begin{table*}[htbp]
\centering
\caption{HitsIn10/30/50  performance comparison with static GNN baselines}
\label{tab:hit_comparison_std}
\resizebox{\textwidth}{!}{
\begin{tabular}{l|ccc|ccc|ccc|ccc}
\toprule
\textbf{Dataset} & \multicolumn{3}{c|}{\textbf{CLGNN}} & \multicolumn{3}{c|}{GCN} & \multicolumn{3}{c|}{DrBC} & \multicolumn{3}{c}{GNN-Bet} \\
\cmidrule(r){2-4} \cmidrule(r){5-7} \cmidrule(r){8-10} \cmidrule(r){11-13}
& HitsIn10 & HitsIn30 & HitsIn50 & HitsIn10 & HitsIn30 & HitsIn50 & HitsIn10 & HitsIn30 & HitsIn50 & HitsIn10 & HitsIn30 & HitsIn50 \\
\midrule
% ants-1-1         & \textbf{2.9 $\pm$ 0.5} & \textbf{15.1 $\pm$ 0.4} & \textbf{43.0 $\pm$ 0.5} & 2.1 $\pm$ 0.7 & 15.0 $\pm$ 0.6 & 37.9 $\pm$ 0.5 & 1.1 $\pm$ 0.6 & 3.8 $\pm$ 0.4 & 17.9 $\pm$ 0.5 & 0.3 $\pm$ 0.6 & 7.1 $\pm$ 0.6 & 31.2 $\pm$ 0.5 \\
sp\_hospital     & \textbf{3.0 $\pm$ 0.5} & \textbf{20.2 $\pm$ 0.3} & \textbf{46.2 $\pm$ 0.3} & 0.8 $\pm$ 0.6 & 9.2 $\pm$ 0.5 & 31.3 $\pm$ 0.5 & 0.9 $\pm$ 0.6 & 3.8 $\pm$ 0.7 & 17.7 $\pm$ 0.7 & 0.9 $\pm$ 0.6 & 3.7 $\pm$ 0.3 & 27.3 $\pm$ 0.4 \\
sp\_hypertext    & \textbf{2.8 $\pm$ 0.3} & \textbf{28.2 $\pm$ 0.4} & \textbf{37.6 $\pm$ 0.6} & 2.3 $\pm$ 0.7 & 26.7 $\pm$ 0.6 & 37.2 $\pm$ 0.3 & 0.4 $\pm$ 0.2 & 1.8 $\pm$ 0.4 & 27.2 $\pm$ 0.5 & 2.1 $\pm$ 0.6 & 24.9 $\pm$ 0.5 & 36.8 $\pm$ 0.6 \\
sp\_workplace    & \textbf{3.1 $\pm$ 0.5} & \textbf{10.2 $\pm$ 0.6} & \textbf{30.8 $\pm$ 0.4} & 0.6 $\pm$ 0.3 & 7.2 $\pm$ 0.4 & 28.8 $\pm$ 0.7 & 0.2 $\pm$ 0.2 & 0.9 $\pm$ 0.3 & 7.9 $\pm$ 0.6 & 0.1 $\pm$ 0.1 & 7.8 $\pm$ 0.4 & 18.8 $\pm$ 0.6 \\
Highschool2011   & \textbf{4.3 $\pm$ 0.7} & \textbf{11.2 $\pm$ 0.4} & \textbf{26.0 $\pm$ 0.4} & 3.9 $\pm$ 0.4 & 13.2 $\pm$ 0.4 & 21.0 $\pm$ 0.4 & 2.2 $\pm$ 0.4 & 1.8 $\pm$ 0.6 & 12.0 $\pm$ 0.4 & 0.4 $\pm$ 0.2 & 1.9 $\pm$ 0.5 & 10.7 $\pm$ 0.4 \\
Highschool2012 & \textbf{6.3 $\pm$ 0.5} & \textbf{9.2 $\pm$ 0.5} & \textbf{12.0 $\pm$ 0.3} & 0.8 $\pm$ 0.4 & 18.3 $\pm$ 0.5 & 29.3 $\pm$ 0.6 & 1.3 $\pm$ 0.6 & 1.1 $\pm$ 0.3 & 9.8 $\pm$ 0.4 & 1.1 $\pm$ 0.3 & 3.3 $\pm$ 0.3 & 10.2 $\pm$ 0.5 \\
Highschool2013 & \textbf{2.2 $\pm$ 0.6} & \textbf{4.1 $\pm$ 0.3} & \textbf{19.8 $\pm$ 0.6} & 0.0 $\pm$ 0.0 & 3.1 $\pm$ 0.5 & 9.1 $\pm$ 0.4 & 2.0 $\pm$ 0.5 & 3.0 $\pm$ 0.4 & 9.1 $\pm$ 0.6 & 0.0 $\pm$ 0.0 & 0.0 $\pm$ 0.0 & 2.2 $\pm$ 0.5 \\
haggle & \textbf{5.2 $\pm$ 0.4} & \textbf{27.2 $\pm$ 0.7} & \textbf{49.8 $\pm$ 0.2} & 0.0 $\pm$ 0.0 & 0.0 $\pm$ 0.0 & 0.0 $\pm$ 0.0 & 0.0 $\pm$ 0.0 & 0.0 $\pm$ 0.0 & 2.8 $\pm$ 0.5 & 0.1 $\pm$ 0.1 & 0.0 $\pm$ 0.0 & 2.9 $\pm$ 0.4 \\
ia-reality-call & \textbf{2.0 $\pm$ 0.5} & \textbf{13.9 $\pm$ 0.5} & \textbf{35.8 $\pm$ 0.6} & 0.1 $\pm$ 0.1 & 1.0 $\pm$ 0.6 & 2.9 $\pm$ 0.6 & 0.0 $\pm$ 0.0 & 0.0 $\pm$ 0.0 & 0.0 $\pm$ 0.0 & 0.0 $\pm$ 0.0 & 0.0 $\pm$ 0.0 & 0.0 $\pm$ 0.0 \\
infectious & \textbf{0.9 $\pm$ 0.5} & \textbf{4.0 $\pm$ 0.6} & \textbf{13.9 $\pm$ 0.7} & 0.0 $\pm$ 0.0 & 0.2 $\pm$ 0.2 & 0.3 $\pm$ 0.3 & 0.2 $\pm$ 0.2 & 0.3 $\pm$ 0.3 & 0.8 $\pm$ 0.6 & 0.3 $\pm$ 0.3 & 0.0 $\pm$ 0.0 & 0.0 $\pm$ 0.0 \\
wikiedits-se & \textbf{0.8 $\pm$ 0.4} & \textbf{5.1 $\pm$ 0.4} & \textbf{12.0 $\pm$ 0.3} & 0.0 $\pm$ 0.0 & 0.0 $\pm$ 0.0 & 0.0 $\pm$ 0.0 & 0.0 $\pm$ 0.0 & 0.0 $\pm$ 0.0 & 0.0 $\pm$ 0.0 & 0.2 $\pm$ 0.2 & 0.1 $\pm$ 0.1 & 0.0 $\pm$ 0.0 \\
sx-mathoverflow & \textbf{1.1 $\pm$ 0.3} & \textbf{3.2 $\pm$ 0.6} & \textbf{6.7 $\pm$ 0.6} & 0.0 $\pm$ 0.0 & 0.2 $\pm$ 0.2 & 0.0 $\pm$ 0.0 & 0.0 $\pm$ 0.0 & 0.0 $\pm$ 0.0 & 0.2 $\pm$ 0.2 & 0.1 $\pm$ 0.1 & 0.1 $\pm$ 0.1 & 0.0 $\pm$ 0.0 \\
superuser & \textbf{0.9 $\pm$ 0.4} & \textbf{7.0 $\pm$ 0.6} & \textbf{12.7 $\pm$ 0.4} & 0.0 $\pm$ 0.0 & 0.1 $\pm$ 0.1 & 0.0 $\pm$ 0.0 & 0.0 $\pm$ 0.0 & 0.0 $\pm$ 0.0 & 0.2 $\pm$ 0.2 & 0.3 $\pm$ 0.3 & 0.0 $\pm$ 0.0 & 0.0 $\pm$ 0.0 \\
\bottomrule
\end{tabular}
}
\vspace{-5pt}
\end{table*}

\begin{table*}[htbp]
\centering
\caption{HitsIn10/30/50 performance comparison with temporal GNN baselines}
\label{tab:temporal_hits}
\resizebox{\textwidth}{!}{
\begin{tabular}{l|ccc|ccc|ccc|ccc}
\toprule
\textbf{Dataset} & \multicolumn{3}{c|}{\textbf{CLGNN}} & \multicolumn{3}{c|}{\textbf{TGAT}} & \multicolumn{3}{c|}{\textbf{TATKC}} & \multicolumn{3}{c}{\textbf{DBGNN}} \\
\cmidrule(r){2-4} \cmidrule(r){5-7} \cmidrule(r){8-10} \cmidrule(r){11-13}
& HitsIn10 & HitsIn30 & HitsIn50 & HitsIn10 & HitsIn30 & HitsIn50 & HitsIn10 & HitsIn30 & HitsIn50 & HitsIn10 & HitsIn30 & HitsIn50 \\
\midrule
sp\_hospital & \textbf{3.0 $\pm$ 0.5} & \textbf{20.2 $\pm$ 0.5} & \textbf{46.2 $\pm$ 0.4} & 1.0 $\pm$ 0.3 & 9.0 $\pm$ 0.4 & 31.0 $\pm$ 0.7 & 0.0 $\pm$ 0.0 & 10.0 $\pm$ 0.5 & 34.0 $\pm$ 0.3 & 8.0 $\pm$ 0.6 & \textbf{24.0 $\pm$ 0.4} & 32.0 $\pm$ 0.4 \\
sp\_hypertext & \textbf{2.8 $\pm$ 0.7} & \textbf{28.2 $\pm$ 0.4} & \textbf{37.6 $\pm$ 0.6} & 2.0 $\pm$ 0.5 & 27.0 $\pm$ 0.6 & 37.0 $\pm$ 0.3 & 4.0 $\pm$ 0.6 & 27.0 $\pm$ 0.4 & 37.0 $\pm$ 0.4 & 6.0 $\pm$ 0.4 & 21.0 $\pm$ 0.5 & 34.0 $\pm$ 0.4 \\
sp\_workplace & \textbf{3.1 $\pm$ 0.6} & \textbf{10.2 $\pm$ 0.5} & \textbf{30.8 $\pm$ 0.7} & 0.0 $\pm$ 0.0 & 4.0 $\pm$ 0.7 & 19.0 $\pm$ 0.7 & 0.0 $\pm$ 0.0 & 9.0 $\pm$ 0.5 & 26.0 $\pm$ 0.6 & 4.0 $\pm$ 0.7 & 20.0 $\pm$ 0.7 & 30.0 $\pm$ 0.5 \\
Highschool2011 & \textbf{4.3 $\pm$ 0.4} & \textbf{11.2 $\pm$ 0.4} & \textbf{26.0 $\pm$ 0.5} & 0.0 $\pm$ 0.0 & 6.0 $\pm$ 0.3 & 25.0 $\pm$ 0.5 & 0.0 $\pm$ 0.0 & 11.0 $\pm$ 0.4 & 25.0 $\pm$ 0.4 & 2.0 $\pm$ 0.6 & 10.0 $\pm$ 0.6 & 22.0 $\pm$ 0.6 \\
Highschool2012 & \textbf{6.3 $\pm$ 0.4} & 9.2 $\pm$ 0.6 & 12.0 $\pm$ 0.5 & 0.0 $\pm$ 0.0 & 5.0 $\pm$ 0.5 & 14.0 $\pm$ 0.5 & 1.0 $\pm$ 0.4 & \textbf{10.0 $\pm$ 0.6} & \textbf{17.0 $\pm$ 0.6} & 3.0 $\pm$ 0.7 & 4.0 $\pm$ 0.3 & 11.0 $\pm$ 0.3 \\
Highschool2013 & 2.2 $\pm$ 0.6 & 4.1 $\pm$ 0.3 & \textbf{19.8 $\pm$ 0.6} & 0.0 $\pm$ 0.0 & 2.0 $\pm$ 0.6 & 5.0 $\pm$ 0.5 & 0.0 $\pm$ 0.0 & 5.0 $\pm$ 0.5 & 11.0 $\pm$ 0.5 & \textbf{3.0 $\pm$ 0.3} & \textbf{13.0 $\pm$ 0.3} & 17.0 $\pm$ 0.5 \\
haggle & \textbf{5.2 $\pm$ 0.4} & \textbf{27.2 $\pm$ 0.7} & \textbf{49.8 $\pm$ 0.2} & 0.0 $\pm$ 0.0 & 1.0 $\pm$ 0.3 & 18.0 $\pm$ 0.7 & 1.0 $\pm$ 0.4 & 8.0 $\pm$ 0.3 & 23.0 $\pm$ 0.6 & 5.0 $\pm$ 0.4 & 25.0 $\pm$ 0.4 & 37.0 $\pm$ 0.5 \\
ia\_reality\_call & \textbf{2.0 $\pm$ 0.5} & \textbf{13.9 $\pm$ 0.5} & \textbf{35.8 $\pm$ 0.6} & 0.0 $\pm$ 0.0 & 10.0 $\pm$ 0.2 & 13.0 $\pm$ 0.6 & 1.0 $\pm$ 0.4 & 10.0 $\pm$ 0.3 & 20.0 $\pm$ 0.5 & 1.0 $\pm$ 0.6 & 11.0 $\pm$ 0.5 & 30.0 $\pm$ 0.3 \\
infectious & \textbf{0.9 $\pm$ 0.5} & 4.0 $\pm$ 0.6 & 13.9 $\pm$ 0.7 & 0.0 $\pm$ 0.0 & 1.0 $\pm$ 0.5 & 4.0 $\pm$ 0.7 & 0.0 $\pm$ 0.0 & 2.0 $\pm$ 0.6 & 9.0 $\pm$ 0.2 & 0.5 $\pm$ 0.5 & \textbf{12.0 $\pm$ 0.4} & \textbf{15.0 $\pm$ 0.5} \\
wikiedits\_se & 0.8 $\pm$ 0.4 & \textbf{5.1 $\pm$ 0.4} & \textbf{12.0 $\pm$ 0.3} & 0.0 $\pm$ 0.0 & 5.0 $\pm$ 0.3 & 6.0 $\pm$ 0.6 & \textbf{1.0 $\pm$ 0.3} & 2.0 $\pm$ 0.4 & 5.0 $\pm$ 0.3 & 1.0 $\pm$ 0.6 & 3.0 $\pm$ 0.3 & 10.0 $\pm$ 0.5 \\
sx\_mathoverflow & \textbf{1.1 $\pm$ 0.3} & \textbf{3.2 $\pm$ 0.6} & \textbf{6.7 $\pm$ 0.6} & 1.0 $\pm$ 0.4 & 1.0 $\pm$ 0.6 & 2.0 $\pm$ 0.3 & 1.0 $\pm$ 0.5 & 2.0 $\pm$ 0.6 & 5.0 $\pm$ 0.4 & 1.0 $\pm$ 0.6 & 3.0 $\pm$ 0.6 & 3.0 $\pm$ 0.3 \\
superuser & \textbf{0.9 $\pm$ 0.4} & \textbf{7.0 $\pm$ 0.6} & \textbf{12.7 $\pm$ 0.4} & 1.0 $\pm$ 0.4 & 3.0 $\pm$ 0.5 & 11.0 $\pm$ 0.3 & 0.0 $\pm$ 0.0 & 2.0 $\pm$ 0.4 & 12.0 $\pm$ 0.6 & 0.0 $\pm$ 0.0 & 3.0 $\pm$ 0.3 & 13.0 $\pm$ 0.4 \\
\bottomrule
\end{tabular}
}
\vspace{-5pt}
\end{table*}

\begin{table*}[htbp]
\centering
\caption{HitsIn10/30/50 performance comparison with contrastive variants (Part I)}
\label{tab:contrastive_hits_part1}
\resizebox{\textwidth}{!}{%
\begin{tabular}{l|ccc|ccc|ccc}
\toprule
\textbf{Dataset} & \multicolumn{3}{c|}{\textbf{IIOIwD}} & \multicolumn{3}{c|}{\textbf{No Contrastive}} & \multicolumn{3}{c}{\textbf{Sampling}} \\
\cmidrule(r){2-4} \cmidrule(r){5-7} \cmidrule(r){8-10}
& HitsIn10 & HitsIn30 & HitsIn50 & HitsIn10 & HitsIn30 & HitsIn50 & HitsIn10 & HitsIn30 & HitsIn50 \\
\midrule
sp\_hospital & 0.0 $\pm$ 0.5 & 8.2 $\pm$ 0.4 & 28.7 $\pm$ 0.2 & 0.0 $\pm$ 0.5 & 8.1 $\pm$ 0.4 & 28.5 $\pm$ 0.5 & 0.3 $\pm$ 0.3 & 7.7 $\pm$ 0.3 & 28.8 $\pm$ 0.4 \\
sp\_hypertext & 2.1 $\pm$ 0.3 & 26.6 $\pm$ 0.5 & 27.5 $\pm$ 0.4 & 1.8 $\pm$ 0.2 & 27.2 $\pm$ 0.3 & 26.6 $\pm$ 0.3 & 1.5 $\pm$ 0.5 & 26.8 $\pm$ 0.4 & 26.8 $\pm$ 0.4 \\
sp\_workplace & 1.9 $\pm$ 0.3 & 10.3 $\pm$ 0.3 & 26.8 $\pm$ 0.4 & 1.6 $\pm$ 0.4 & 9.6 $\pm$ 0.5 & 27.3 $\pm$ 0.3 & 1.5 $\pm$ 0.4 & 10.2 $\pm$ 0.4 & 27.3 $\pm$ 0.2 \\
highschool2011 & 0.0 $\pm$ 0.4 & 0.3 $\pm$ 0.4 & 9.3 $\pm$ 0.3 & 0.0 $\pm$ 0.3 & 0.0 $\pm$ 0.2 & 8.5 $\pm$ 0.4 & 0.0 $\pm$ 0.4 & 0.4 $\pm$ 0.3 & 8.9 $\pm$ 0.4 \\
highschool2012 & 0.3 $\pm$ 0.5 & 0.8 $\pm$ 0.2 & 13.7 $\pm$ 0.3 & 0.3 $\pm$ 0.5 & 0.5 $\pm$ 0.4 & 13.9 $\pm$ 0.3 & 0.0 $\pm$ 0.3 & 1.4 $\pm$ 0.3 & 14.0 $\pm$ 0.4 \\
highschool2013 & 1.4 $\pm$ 0.3 & 2.6 $\pm$ 0.3 & 10.5 $\pm$ 0.3 & 1.2 $\pm$ 0.4 & 2.7 $\pm$ 0.4 & 9.9 $\pm$ 0.4 & 1.1 $\pm$ 0.4 & 2.6 $\pm$ 0.5 & 9.8 $\pm$ 0.3 \\
haggle & 0.0 $\pm$ 0.2 & 0.4 $\pm$ 0.5 & 0.0 $\pm$ 0.4 & 0.3 $\pm$ 0.4 & 0.0 $\pm$ 0.3 & 0.0 $\pm$ 0.5 & 0.4 $\pm$ 0.4 & 0.0 $\pm$ 0.3 & 0.2 $\pm$ 0.5 \\
ia\_reality\_call & 0.0 $\pm$ 0.4 & 3.2 $\pm$ 0.3 & 5.2 $\pm$ 0.3 & 0.0 $\pm$ 0.4 & 3.1 $\pm$ 0.5 & 5.2 $\pm$ 0.4 & 0.0 $\pm$ 0.3 & 2.8 $\pm$ 0.3 & 5.5 $\pm$ 0.3 \\
infectious & 0.4 $\pm$ 0.5 & 3.4 $\pm$ 0.3 & 10.5 $\pm$ 0.5 & 0.0 $\pm$ 0.5 & 3.5 $\pm$ 0.5 & 10.8 $\pm$ 0.3 & 0.4 $\pm$ 0.3 & 2.7 $\pm$ 0.4 & 11.4 $\pm$ 0.4 \\
wikiedits\_se & 0.3 $\pm$ 0.4 & 3.4 $\pm$ 0.5 & 6.0 $\pm$ 0.4 & 0.3 $\pm$ 0.4 & 3.2 $\pm$ 0.4 & 6.4 $\pm$ 0.3 & 0.0 $\pm$ 0.2 & 3.1 $\pm$ 0.2 & 6.0 $\pm$ 0.4 \\
sx\_mathoverflow & 0.0 $\pm$ 0.4 & 1.0 $\pm$ 0.4 & 1.2 $\pm$ 0.5 & 0.0 $\pm$ 0.3 & 1.3 $\pm$ 0.3 & 0.9 $\pm$ 0.2 & 0.0 $\pm$ 0.5 & 1.3 $\pm$ 0.4 & 0.9 $\pm$ 0.3 \\
superuser & 0.0 $\pm$ 0.4 & 0.5 $\pm$ 0.2 & 2.5 $\pm$ 0.2 & 0.4 $\pm$ 0.4 & 1.0 $\pm$ 0.2 & 3.0 $\pm$ 0.3 & 0.0 $\pm$ 0.3 & 0.9 $\pm$ 0.4 & 3.1 $\pm$ 0.2 \\
\bottomrule
\end{tabular}%
}
\vspace{-5pt}
\end{table*}

\begin{table*}[htbp]
\centering
\caption{HitsIn10/30/50 performance comparison with contrastive variants (Part II)}
\label{tab:contrastive_hits_part2}
\resizebox{\textwidth}{!}{%
\begin{tabular}{l|ccc|ccc}
\toprule
\textbf{Dataset} & \multicolumn{3}{c|}{\textbf{Tweedie Loss}} & \multicolumn{3}{c}{\textbf{Graph Aug}}  \\
\cmidrule(r){2-4} \cmidrule(r){5-7}
& HitsIn10 & HitsIn30 & HitsIn50 & HitsIn10 & HitsIn30 & HitsIn50 \\
\midrule
sp\_hospital & 0.0 $\pm$ 0.3 & 8.1 $\pm$ 0.2 & 28.8 $\pm$ 0.3 & 0.0 $\pm$ 0.4 & 7.7 $\pm$ 0.4 & 29.1 $\pm$ 0.2 \\
sp\_hypertext & 2.0 $\pm$ 0.3 & 27.5 $\pm$ 0.4 & 27.4 $\pm$ 0.5 & 2.1 $\pm$ 0.5 & 26.6 $\pm$ 0.3 & 26.5 $\pm$ 0.3 \\
sp\_workplace & 1.9 $\pm$ 0.2 & 10.4 $\pm$ 0.4 & 26.8 $\pm$ 0.2 & 1.8 $\pm$ 0.3 & 10.2 $\pm$ 0.4 & 27.4 $\pm$ 0.3 \\
highschool2011 & 0.0 $\pm$ 0.2 & 0.0 $\pm$ 0.2 & 9.4 $\pm$ 0.4 & 0.1 $\pm$ 0.5 & 0.3 $\pm$ 0.3 & 9.4 $\pm$ 0.4 \\
highschool2012 & 0.0 $\pm$ 0.5 & 1.5 $\pm$ 0.3 & 14.0 $\pm$ 0.3 & 0.0 $\pm$ 0.2 & 1.1 $\pm$ 0.4 & 13.6 $\pm$ 0.3 \\
highschool2013 & 0.5 $\pm$ 0.4 & 3.2 $\pm$ 0.2 & 10.0 $\pm$ 0.3 & 1.1 $\pm$ 0.3 & 3.2 $\pm$ 0.3 & 10.4 $\pm$ 0.2 \\
haggle & 0.4 $\pm$ 0.4 & 0.1 $\pm$ 0.2 & 0.0 $\pm$ 0.5 & 0.1 $\pm$ 0.2 & 0.0 $\pm$ 0.4 & 0.0 $\pm$ 0.2 \\
ia\_reality\_call & 0.4 $\pm$ 0.4 & 3.3 $\pm$ 0.4 & 5.1 $\pm$ 0.3 & 0.0 $\pm$ 0.4 & 2.8 $\pm$ 0.2 & 5.1 $\pm$ 0.3 \\
infectious & 0.1 $\pm$ 0.2 & 3.1 $\pm$ 0.5 & 10.6 $\pm$ 0.4 & 0.4 $\pm$ 0.4 & 3.2 $\pm$ 0.4 & 10.9 $\pm$ 0.3 \\
wikiedits\_se & 0.0 $\pm$ 0.4 & 2.5 $\pm$ 0.2 & 6.3 $\pm$ 0.3 & 0.0 $\pm$ 0.4 & 3.3 $\pm$ 0.3 & 6.1 $\pm$ 0.2 \\
sx\_mathoverflow & 0.0 $\pm$ 0.3 & 1.0 $\pm$ 0.4 & 1.2 $\pm$ 0.3 & 0.5 $\pm$ 0.4 & 1.1 $\pm$ 0.4 & 0.9 $\pm$ 0.3 \\
superuser & 0.0 $\pm$ 0.4 & 1.0 $\pm$ 0.5 & 3.2 $\pm$ 0.2 & 0.0 $\pm$ 0.4 & 0.5 $\pm$ 0.4 & 3.4 $\pm$ 0.4 \\
\bottomrule
\end{tabular}%
}
\vspace{-5pt}
\end{table*}

\begin{table*}[htbp]
\centering
\caption{Prediction MAE by CLGNN across discretized TBC ranges (zero/mid/high)}
\label{tab:discretized_mae_ranges}
\begin{tabular}{lccc}
\toprule
\textbf{Dataset} & \textbf{zero-value MAE} & \textbf{mid-value MAE} & \textbf{high-value MAE} \\
\midrule
sp\_hospital       & 2.04 $\pm$ 0.05 & 4.77 $\pm$ 0.12 & 6.81 $\pm$ 0.17 \\
sp\_hypertext      & 3.25 $\pm$ 0.06 & 14.89 $\pm$ 0.29 & 22.48 $\pm$ 0.44 \\
sp\_workplace      & 1.12 $\pm$ 0.02 & 4.48 $\pm$ 0.08 & 6.73 $\pm$ 0.11 \\
Highschool2011     & 6.35 $\pm$ 0.05 & 7.63 $\pm$ 0.06 & 12.71 $\pm$ 0.10 \\
Highschool2012     & 18.18 $\pm$ 0.15 & 54.53 $\pm$ 0.44 & 84.83 $\pm$ 0.69 \\
Highschool2013     & 26.77 $\pm$ 0.16 & 60.22 $\pm$ 0.37 & 73.60 $\pm$ 0.45 \\
haggle             & 0.67 $\pm$ 0.02 & 1.40 $\pm$ 0.03 & 3.15 $\pm$ 0.07 \\
ia-reality-call    & 25.53 $\pm$ 0.43 & 59.57 $\pm$ 1.01 & 85.09 $\pm$ 1.45 \\
infectious         & 15.28 $\pm$ 0.29 & 38.21 $\pm$ 0.73 & 53.44 $\pm$ 1.03 \\
wikiedits-se       & 2.08 $\pm$ 0.01 & 7.69 $\pm$ 0.04 & 6.77 $\pm$ 0.04 \\
sx-mathoverflow    & 20.47 $\pm$ 0.50 & 87.01 $\pm$ 2.12 & 156.96 $\pm$ 3.83 \\
superuser          & 43.14 $\pm$ 0.93 & 183.36 $\pm$ 3.97 & 330.77 $\pm$ 7.16 \\
\bottomrule
\end{tabular}
\vspace{-5pt}
\end{table*}

\newpage
\section{Additional Results on Hyperparameter Tuning}\label{appendix:Results_Hyper}

\begin{figure*}[htbp]
    \centering
    \begin{subfigure}[t]{0.5\linewidth}
        \centering
        \includegraphics[width=\linewidth]{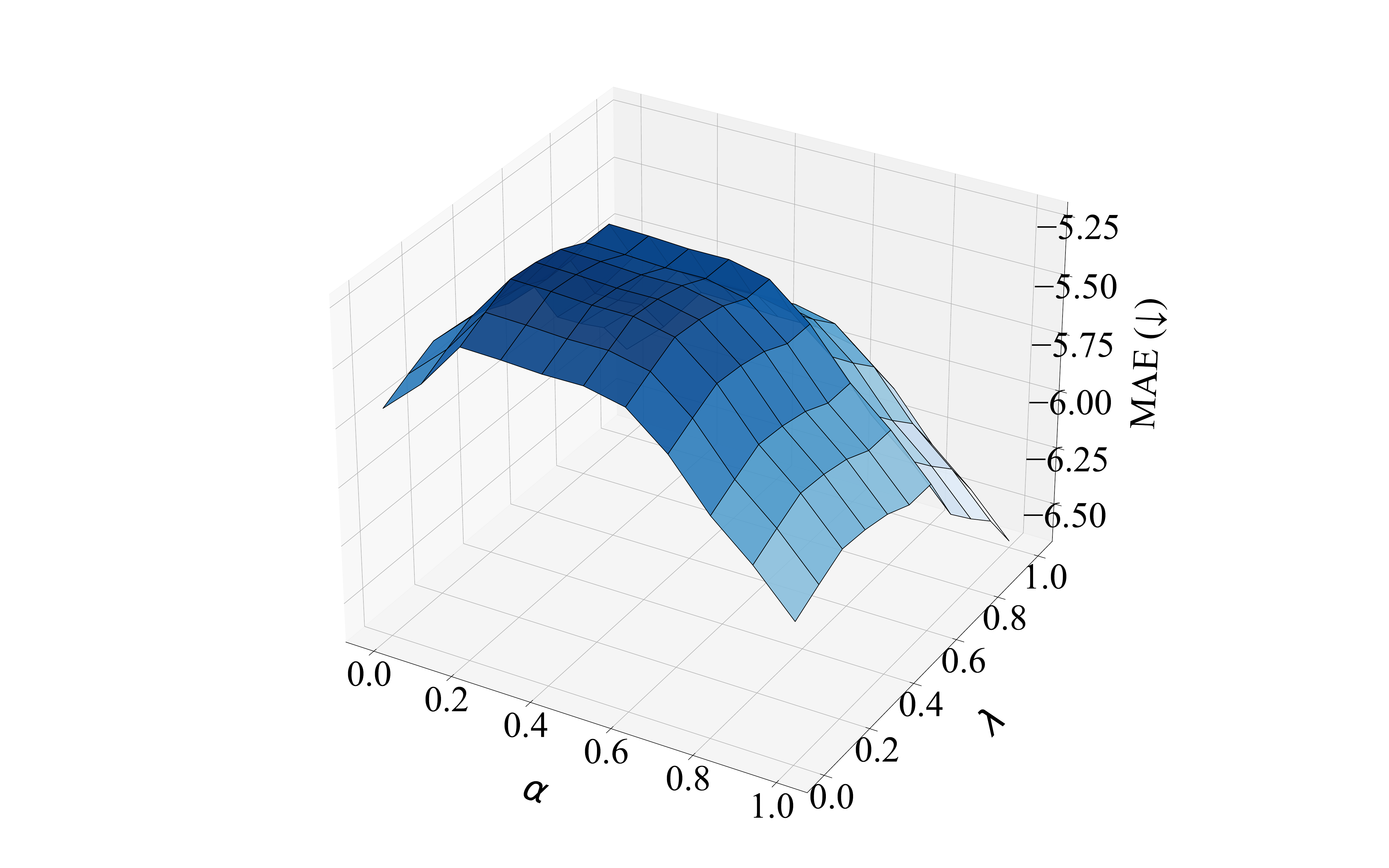}
        \caption{MAE surface over $(\alpha,\lambda)$}
        \label{fig:pingmian_mae}
    \end{subfigure}%
    \hfill
    \begin{subfigure}[t]{0.5\linewidth}
        \centering
        \includegraphics[width=\linewidth]{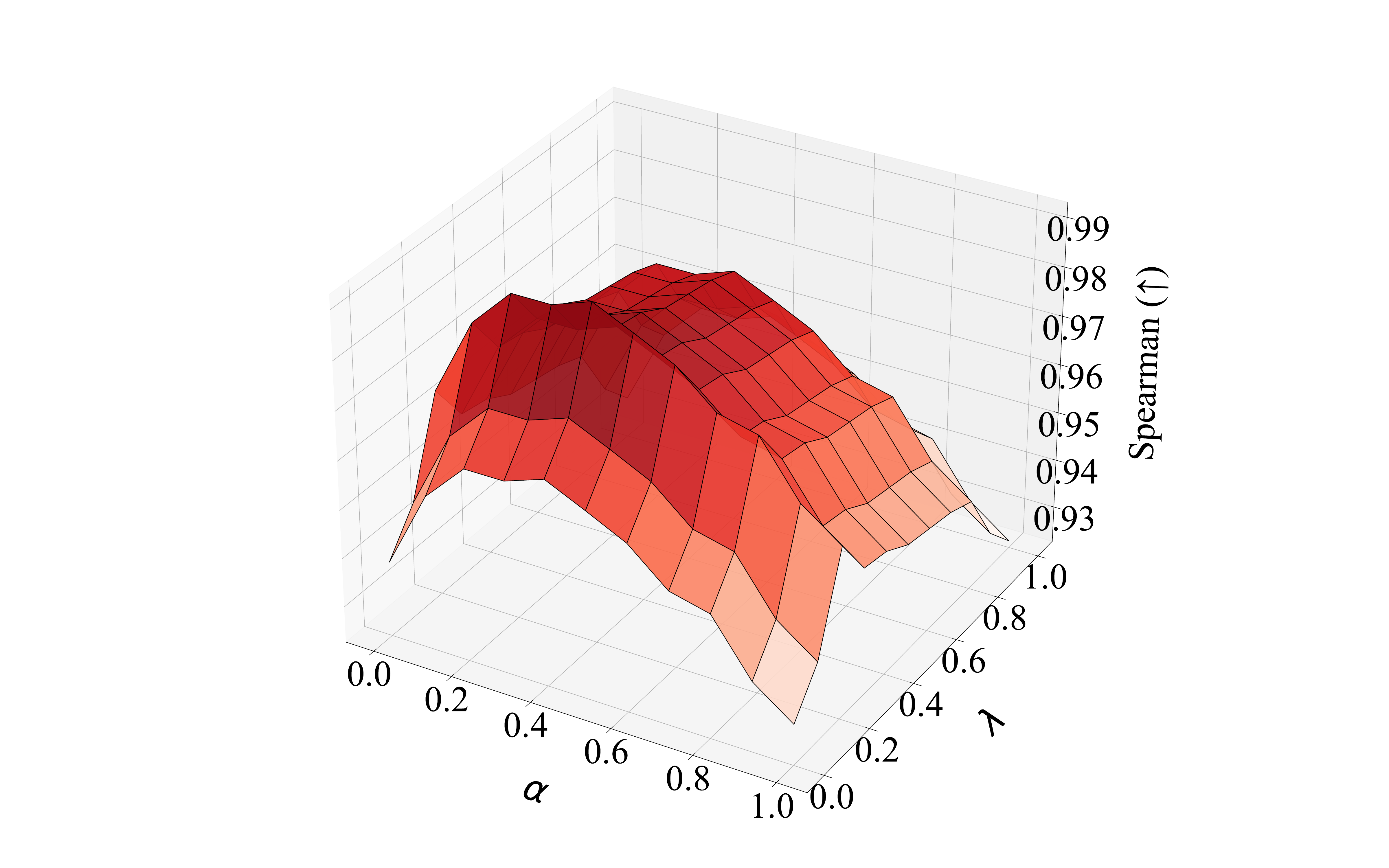}
        \caption{Spearman surface over $(\alpha,\lambda)$}
        \label{fig:pingmian_spe}
    \end{subfigure}

    \vspace{0.8em}

    \begin{subfigure}[t]{0.45\linewidth}
        \centering
        \includegraphics[width=\linewidth]{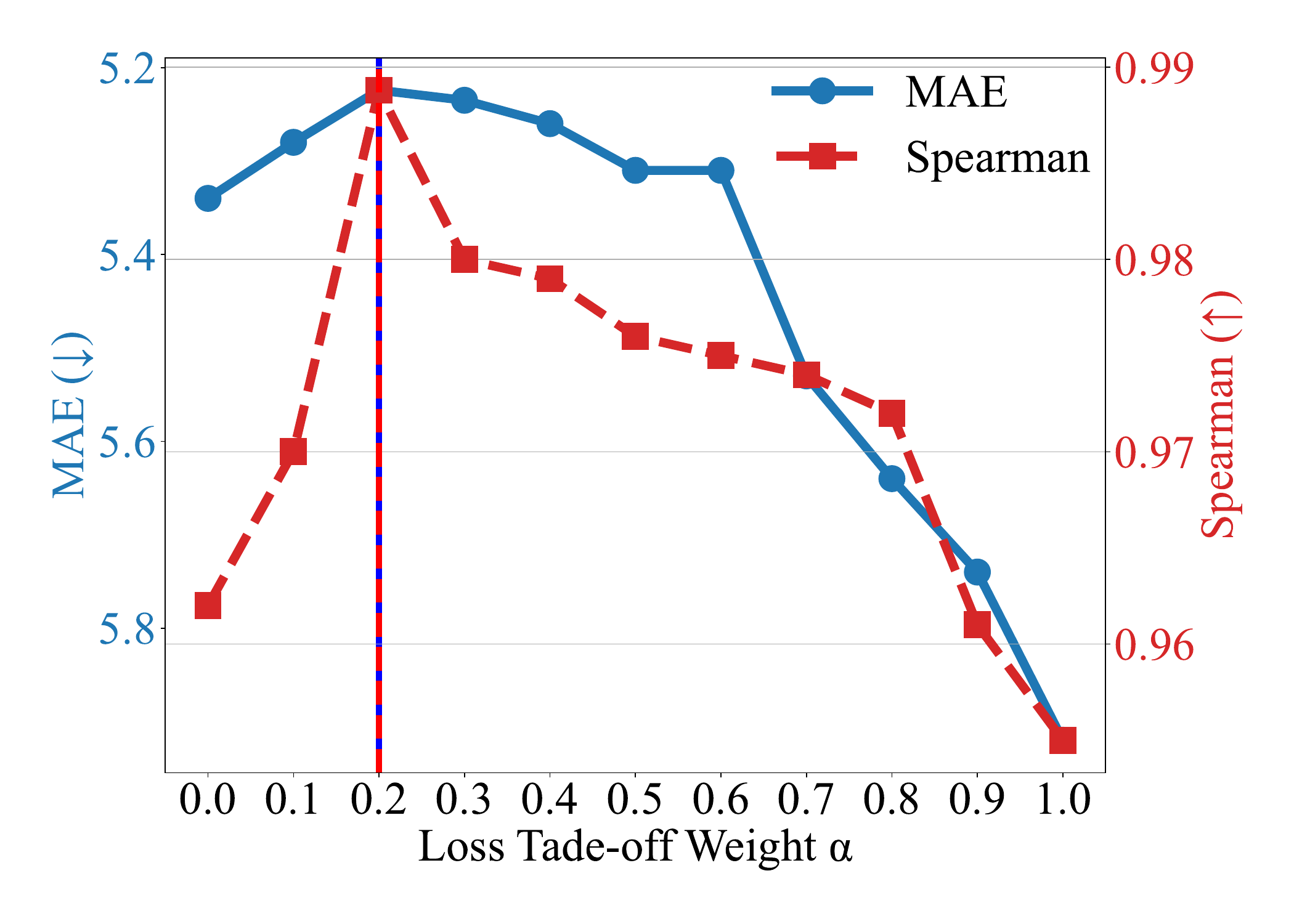}
        \caption{Tuning $\alpha$ (fix $\lambda=0.4$)}
        \label{fig:xiaorong_alpha}
    \end{subfigure}%
    \hfill
    \begin{subfigure}[t]{0.45\linewidth}
        \centering
        \includegraphics[width=\linewidth]{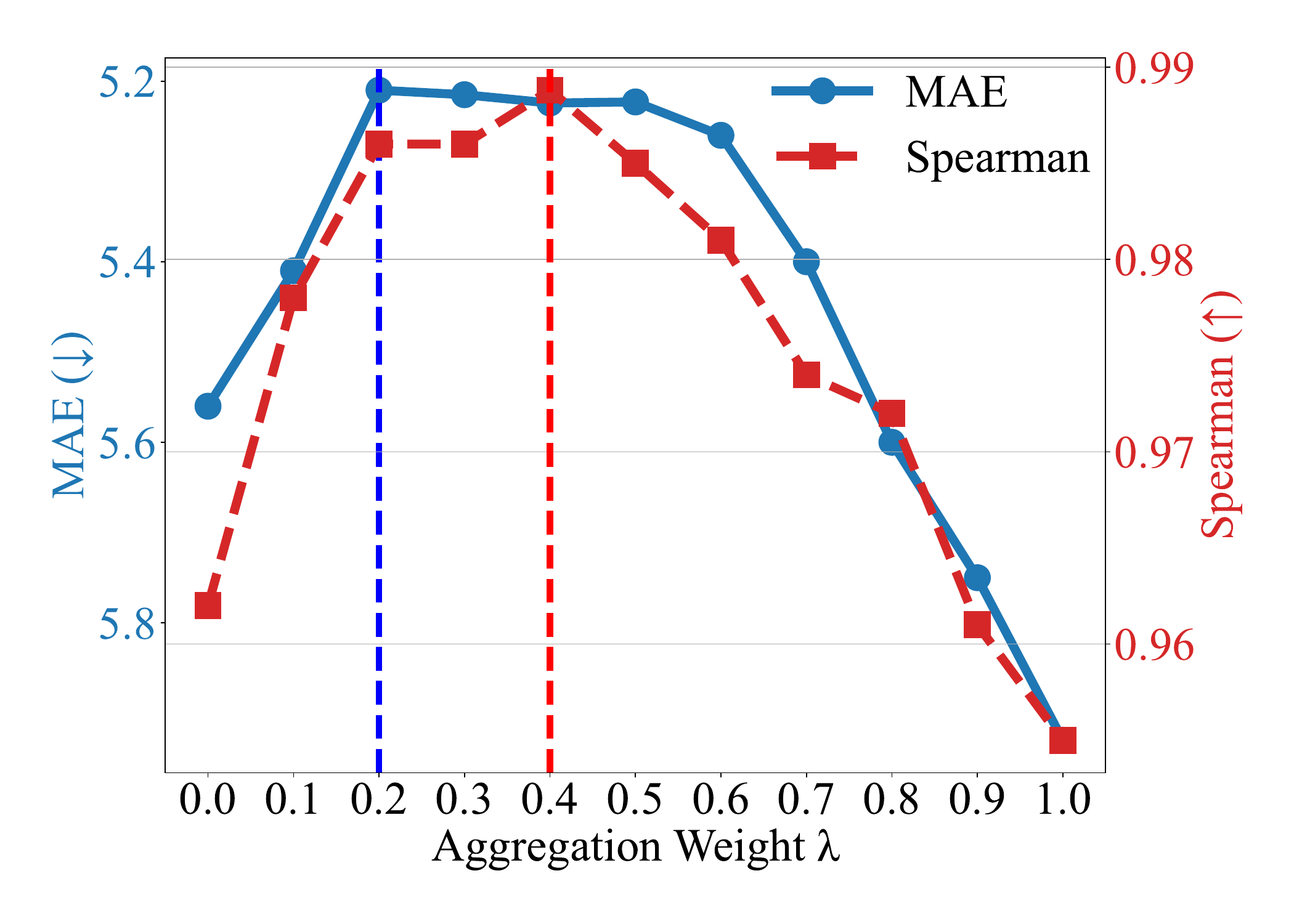}
        \caption{Tuning $\lambda$ (fix $\alpha=0.2$)}
        \label{fig:xiaorong_lambda}
    \end{subfigure}

    \caption{Hyperparameter tuning results over $(\alpha,\lambda)$ space and individual parameter ablations on the dataset haggle. (a)(b): joint performance surfaces for MAE and Spearman; (c)(d): 1D sensitivity analysis with fixed $\lambda$ and $\alpha$, respectively.}
    \label{fig:hyperparam_all}
    \vspace{0.5em}
    \begin{tcolorbox}[colback=gray!10!white, colframe=black!50, boxrule=0.5pt, arc=2pt, left=4pt, right=4pt, top=2pt, bottom=2pt, fonttitle=\bfseries, title=Note]
    The coefficient $\alpha$ controls the trade-off between the contrastive and regression objectives in the loss function. Introducing a moderate contrastive signal (e.g., $\alpha=0.2$) enhances the discriminability of node representations; however, overly large values may harm numerical accuracy.
    
    Meanwhile, $\lambda$ governs the neighborhood aggregation mechanism, balancing between mean aggregation (stable but coarse) and attention-based aggregation (informative but sensitive).
    
    Our tuning results suggest that \textit{contrastive learning should be weakly imposed, while attention should be moderately involved}, in order to achieve a representation that is both expressive and regression-friendly.
    \end{tcolorbox}

\end{figure*}

We conduct a comprehensive hyperparameter tuning study to investigate the effect of two key parameters in our framework: the loss trade-off coefficient $\alpha$, which balances the regression and contrastive objectives, and the aggregation weight $\lambda$, which controls the contribution between mean and attention-based neighborhood aggregation. Our analysis includes both 3D surface plots and fix-one-vary-one ablation curves on the dataset haggle.

As shown in Figure~\ref{fig:pingmian_mae} and Figure~\ref{fig:pingmian_spe}, we construct two performance surfaces over the $(\alpha, \lambda)$ space to evaluate MAE and Spearman correlation. Due to the inversion of the MAE axis for visualization clarity, the MAE surface appears convex, with its highest point corresponding to the lowest MAE, located near $(\alpha, \lambda) = (0.2, 0.2)$. In contrast, the Spearman surface reaches its global maximum near $(0.2, 0.4)$ without inversion. These results indicate that both hyperparameters jointly influence model performance, and their optimal settings are not independent. The alignment between low MAE and high Spearman regions further supports the effectiveness of the selected configuration.

To further isolate their individual impacts, we perform two 1D tuning experiments. First, we fix $\lambda=0.4$ and vary $\alpha$ (Figure~\ref{fig:xiaorong_alpha}). The results show that both MAE and Spearman improve as $\alpha$ increases from 0 to 0.2, with a performance peak (i.e., MAE trough) near $\alpha=0.2$. Beyond this point, performance gradually deteriorates. This trend stems from the nature of the loss design: the coefficient $\alpha$ governs the trade-off between the contrastive loss $\mathcal{L}_{\text{contrast}}$, which encourages representation-level separation, and the regression loss $\mathcal{L}_{\text{regress}}$, which directly optimizes numerical TBC values. A small portion of contrastive learning (e.g., $\alpha=0.2$) provides beneficial regularization, promoting structured embedding geometry and improving top-K ranking. However, when $\alpha$ becomes too large, the model overemphasizes clustering and neglects precise value regression, leading to worse MAE and flatter Spearman scores. In the inverted MAE surface plot (Figure~\ref{fig:pingmian_mae}), this manifests as a convex shape peaking at the optimal region.

Second, we fix $\alpha=0.2$ and vary $\lambda$, which governs the aggregation scheme in Eq.~(1). Here, $\lambda$ interpolates between mean aggregation $h_v^{\text{(mean)}}(t)$ and attention-based aggregation $h_v^{\text{(attention)}}(t)$. Mean aggregation operates at the node level, equally incorporating messages from all neighbors. It provides stability but lacks the ability to emphasize structurally important interactions. In contrast, attention aggregation is performed at the edge level, assigning dynamic weights to each neighbor based on temporal and structural context. This allows the model to focus on more informative time-respecting edges, which is particularly beneficial for estimating betweenness centrality in long-tailed, sparse temporal networks.

We observe that while $\lambda=0.2$ yields the lowest MAE, the performance difference from $\lambda=0.4$ is minor (5.21 vs. 5.2242), whereas Spearman correlation is notably higher at $\lambda=0.4$ (0.9888 vs. 0.986). Therefore, we adopt $\lambda=0.4$ as the default, prioritizing better ranking quality with only marginal loss in numerical accuracy. This confirms that moderate edge-level attention enhances discriminative power, while mean aggregation contributes to robust value estimation—and a balance of both yields the best trade-off.

% $\alpha$ 控制的是损失函数中对比学习与回归目标的权衡，适当加入对比损失（如 $\alpha=0.2$）有助于优化节点表示，使其更具判别性；但如果过大，则会削弱数值精度。$\lambda$ 则控制邻居聚合方式，在 mean（稳定）与 attention（可区分性强）之间调节。结果表明：对比学习宜弱不宜强，attention 宜适度参与，以形成既有区分力又有数值精度的模型表示。
%一个图（聚合参数）、一个图（两个task的权重）
\newpage
\section{Additional Results on Optimal Path Expansion}\label{appendix:Results_paths}
%一个表格
% Our model performs excellently across three different path types. In high-density graphs with frequent edges, the earliest paths are often more "local," which weakens the stability of the overall path structure. These earliest paths tend to "dilute" the ranking signal, especially in networks with concurrent multi-paths, where the effect is often worse.

% In the Sp\_workplace and Highschool2011 datasets, incorporating the earliest paths significantly reduced the model's MAE, indicating greater accuracy in reconstructing the true centrality values. This improvement may stem from the fact that the selection of earliest paths reduces the disturbance caused by redundant shortest paths, thereby enhancing the stability of centrality estimates. However, the ranking correlation and Top-k hit rate decreased on these datasets, suggesting that while this path selection strategy improves absolute prediction accuracy, it may reduce the differentiation of centrality rankings between nodes.
Table~\ref{tab:temporal_path_settings} investigates the impact of different temporal path definitions on model performance, including: (1) Shortest Path, (2) Shortest + Earliest Arrival (SEA), and (3) Shortest + Latest Departure + Earliest Arrival (SLDEA). As temporal constraints increase from left to right, we generally observe a degradation in prediction accuracy, ranking quality, and top-K identification—especially under the SLDEA setting, which requires modeling both forward and backward temporal dependencies. Some exceptions exist: datasets like sp\_hospital and sp\_hypertext show improved Spearman under SEA, likely because moderate temporal guidance (e.g., earliest arrival) filters out noisy paths and improves alignment with ground-truth centrality. Nevertheless, performance consistently drops under SLDEA, highlighting the difficulty for standard GNNs to capture complex temporal path semantics without explicit path-level modeling.

Quantitatively, models under the basic Shortest Path setting achieve the lowest MAE across 10 out of 13 datasets, showing 1.2 times to 2.4 times improvement in MAE compared to the most restrictive SLDEA variant. In terms of Spearman correlation, the Shortest Path definition leads to 1.3 times to 2.1 times higher values than SLDEA, and yields higher or equal Hits@10 in the majority of datasets. For instance, on Highschool2013, the Spearman drops from 0.6474 under Shortest Path to 0.2510 under SLDEA—a 2.6 times decrease in ranking quality. Similarly, on sp\_hypertext, the MAE increases from 6.09 to 50.15, an 8.2 times increase, highlighting the difficulty of modeling time-constrained paths.

These results suggest that while stricter path definitions such as SLDEA may better reflect real-world temporal dependencies, they introduce complex time-window constraints that standard GNNs struggle to encode. Most temporal GNNs aggregate over local neighborhoods without explicit modeling of time-respecting path sequences, making them less effective under restrictive temporal semantics. n contrast, the more relaxed Shortest Path and SEA definitions enable the model to benefit from a broader range of paths, yielding better stability and generalization. This analysis emphasizes the need for more advanced mechanisms—such as explicit time-aware path enumeration, reachability modeling, and sequence-level encoding—to effectively handle realistic but restrictive temporal path scenarios.

\begin{table*}[htbp]
\centering
\caption{Comparison under different temporal path definitions.}
\label{tab:temporal_path_settings}
\resizebox{\textwidth}{!}{
\begin{tabular}{l|ccc|ccc|ccc}
\toprule
\textbf{Dataset} & \multicolumn{3}{c|}{Shortest} & \multicolumn{3}{c|}{Shortest + Earliest Arrival (CLGNN)} & \multicolumn{3}{c}{Shortest + Latest Departure + Earliest Arrival} \\
\cmidrule(r){2-4} \cmidrule(r){5-7} \cmidrule(r){8-10}
& MAE & Spearman & hits@50 ($\uparrow$) & MAE & Spearman & hits@50 ($\uparrow$) & MAE & Spearman & hits@50 ($\uparrow$) \\
\midrule
ants-1-1 & \textbf{30.1114 $\pm$ 0.2445} & 0.3510 $\pm$ 0.0188 & 42 $\pm$ 4 & 49.0308 $\pm$ 0.6124 & \textbf{0.5585 $\pm$ 0.0164} & \textbf{43 $\pm$ 3} & 61.8830 $\pm$ 0.3835 & 0.3246 $\pm$ 0.0209 & 36 $\pm$ 1 \\
sp\_hospital & \textbf{7.9452 $\pm$ 0.1149} & 0.7213 $\pm$ 0.0124 & 42 $\pm$ 5 & 13.6171 $\pm$ 0.3270 & \textbf{0.7365 $\pm$ 0.0155} & \textbf{46 $\pm$ 4} & 25.4520 $\pm$ 0.2458 & 0.5123 $\pm$ 0.0128 & 35 $\pm$ 2 \\
sp\_hypertext & \textbf{6.0882 $\pm$ 0.0734} & 0.4890 $\pm$ 0.0199 & \textbf{37 $\pm$ 5} & 40.6243 $\pm$ 0.7979 & \textbf{0.4928 $\pm$ 0.0155} & 37 $\pm$ 4 & 50.1480 $\pm$ 0.5274 & 0.3379 $\pm$ 0.0260 & 36 $\pm$ 4 \\
sp\_workplace & \textbf{16.7165 $\pm$ 0.1769} & \textbf{0.5414 $\pm$ 0.0107} & \textbf{38 $\pm$ 1} & 12.3312 $\pm$ 0.2093 & 0.4538 $\pm$ 0.0191 & 31 $\pm$ 1 & 36.1800 $\pm$ 0.1632 & 0.3606 $\pm$ 0.0115 & 23 $\pm$ 1 \\
Highschool2011 & 33.4400 $\pm$ 0.1911 & \textbf{0.5617 $\pm$ 0.0282} & \textbf{35 $\pm$ 2} & \textbf{26.6913 $\pm$ 0.2167} & 0.5213 $\pm$ 0.0257 & 26 $\pm$ 3 & 37.3640 $\pm$ 0.3992 & 0.3976 $\pm$ 0.0297 & 26 $\pm$ 4 \\
Highschool2012 & \textbf{123.8627 $\pm$ 0.7080} & 0.4763 $\pm$ 0.0152 & \textbf{17 $\pm$ 2} & 157.5385 $\pm$ 1.2792 & \textbf{0.5057 $\pm$ 0.0230} & 12 $\pm$ 5 & 197.9800 $\pm$ 1.1633 & 0.3909 $\pm$ 0.0254 & 8 $\pm$ 3 \\
Highschool2013 & \textbf{85.1407 $\pm$ 0.3950} & \textbf{0.6474 $\pm$ 0.0233} & \textbf{28 $\pm$ 2} & 160.5927 $\pm$ 0.9895 & 0.5312 $\pm$ 0.0187 & 20 $\pm$ 3 & 192.1660 $\pm$ 0.9062 & 0.2510 $\pm$ 0.0140 & 16 $\pm$ 4 \\
haggle & \textbf{4.9951 $\pm$ 0.0676} & \textbf{0.9910 $\pm$ 0.0162} & 47 $\pm$ 3 & 5.2242 $\pm$ 0.1166 & 0.9888 $\pm$ 0.0284 & \textbf{49 $\pm$ 1} & 6.4640 $\pm$ 0.0933 & 0.7300 $\pm$ 0.0101 & 34 $\pm$ 5 \\
ia-reality-call & \textbf{149.0044 $\pm$ 1.5813} & \textbf{0.5076 $\pm$ 0.0204} & 23 $\pm$ 4 & 170.1932 $\pm$ 2.8971 & 0.4690 $\pm$ 0.0142 & \textbf{36 $\pm$ 1} & 186.9010 $\pm$ 2.7329 & 0.3009 $\pm$ 0.0263 & 24 $\pm$ 2 \\
infectious & \textbf{89.2501 $\pm$ 1.0522} & \textbf{0.5890 $\pm$ 0.0209} & 13 $\pm$ 5 & 106.9288 $\pm$ 2.0489 & 0.5503 $\pm$ 0.0188 & \textbf{14 $\pm$ 2} & 100.5650 $\pm$ 1.2965 & 0.3873 $\pm$ 0.0241 & 12 $\pm$ 4 \\
wikiedits-se & \textbf{10.6679 $\pm$ 0.0451} & \textbf{0.5313 $\pm$ 0.0137} & 12 $\pm$ 3 & 16.5407 $\pm$ 0.0895 & 0.5124 $\pm$ 0.0210 & \textbf{12 $\pm$ 4} & 25.8910 $\pm$ 0.1903 & 0.4142 $\pm$ 0.0246 & 9 $\pm$ 4 \\
sx-mathoverflow & \textbf{227.0346 $\pm$ 3.3304} & \textbf{0.2310 $\pm$ 0.0294} & 4 $\pm$ 3 & 264.4417 $\pm$ 6.4519 & 0.2140 $\pm$ 0.0237 & \textbf{7 $\pm$ 1} & 311.0830 $\pm$ 1.5786 & 0.1815 $\pm$ 0.0254 & 4 $\pm$ 4 \\
superuser & \textbf{534.7802 $\pm$ 7.0360} & \textbf{0.4423 $\pm$ 0.0255} & \textbf{15 $\pm$ 3} & 557.2800 $\pm$ 12.0645 & 0.4140 $\pm$ 0.0336 & 13 $\pm$ 4 & 592.9630 $\pm$ 6.8348 & 0.2512 $\pm$ 0.0115 & 4 $\pm$ 3 \\
\bottomrule
\end{tabular}
}
\end{table*}

\newpage

\end{document}